\journal{Data Mining and Knowledge Discovery (Springer)}
\newif\ifproofread
\DeclarePairedDelimiter{\ceil}{\lceil}{\rceil}
\newsavebox\CBox
\def\textBF#1{\sbox\CBox{#1}\resizebox{\wd\CBox}{\ht\CBox}{\textbf{#1}}}
\newtheorem{theorem}{Theorem}
\newtheorem{corollary}{Corollary}
\def\textBF#1{\sbox\CBox{#1}\resizebox{\wd\CBox}{\ht\CBox}{\textbf{#1}}}
\newcommand\MyBox[2]{
  \fbox{\lower0.75cm
    \vbox to 1.7cm{\vfil
      \hbox to 1.7cm{\hfil\parbox{1.4cm}{#1\\#2}\hfil}
      \vfil}%
  }%
}
\newcommand\MyBoxx[1]{
  \fbox{\lower0.75cm
    \vbox to 1.7cm{\vfil
      \hbox to 1.7cm{\hfil\parbox{0.46cm}{#1}\hfil}
      \vfil}%
  }%
}
\begin{document}

\proofreadtrue
% \proofreadfalse

\begin{frontmatter}

\title{The Area Under the ROC Curve as a Measure of Clustering Quality}

\author[pablo]{Pablo~A.~Jaskowiak\corref{corr}}
\cortext[corr]{Corresponding author}
\ead{pablo.andretta@ufsc.br}

\author[ivan]{Ivan G. Costa}
%\ead{ivan.costa@rwth-aachen.de}

\author[ricardo1]{Ricardo~J.~G.~B.~Campello}
%\ead{ricardo.campello@newcastle.edu.au}

\fntext[pablo]{Federal University of Santa Catarina, Joinville, Santa Catarina, Brazil.}

\fntext[ivan]{Institute for Computational Genomics, RWTH Aachen University Medical Faculty, Aachen, Germany.}

\fntext[ricardo1]{School of Mathematical and Physical Sciences, University of Newcastle, Australia.}

%\authorrunning{Jaskowiak et al.} % if too long for running head

\begin{abstract}

The Area Under the the Receiver Operating Characteristics (ROC) Curve, referred to as AUC, is a well-known performance measure in the supervised learning domain. Due to its compelling features, it has been employed in a number of studies to evaluate and compare the performance of different classifiers. In this work, we explore AUC as a performance measure in the unsupervised learning domain, more specifically, in the context of cluster analysis. In particular, we elaborate on the use of AUC as an internal/relative measure of clustering quality, which we refer to as Area Under the Curve for Clustering (AUCC). We show that the AUCC of a given candidate clustering solution has an expected value under a null model of random clustering solutions, regardless of the size of the dataset and, more importantly, regardless of the number or the (im)balance of clusters under evaluation. In addition, we elaborate on the fact that, in the context of internal/relative clustering validation as we consider, AUCC is actually a linear transformation of the Gamma criterion from~\cite{BakHub75}, for which we also formally derive a theoretical expected value for chance clusterings. We also discuss the computational complexity of these criteria and show that, while an ordinary implementation of Gamma can be computationally prohibitive and impractical for most real applications of cluster analysis, its equivalence with AUCC actually unveils a much more efficient algorithmic procedure.~Our theoretical findings are supported by experimental results. These results show that, in addition to an effective and robust quantitative evaluation provided by AUCC, visual inspection of the ROC curves themselves can be useful to further assess a candidate clustering solution from a broader, qualitative perspective as well.  
\end{abstract}
\begin{keyword}
clustering \sep clustering validation \sep internal validation \sep relative validation \sep area under the curve \sep AUC \sep receiver operating characteristics \sep ROC \sep area under the curve for clustering \sep AUCC \sep quantitative clustering evaluation \sep qualitative/visual clustering evaluation
\end{keyword}

\end{frontmatter}
%\linenumbers

\doublespacing

\setcounter{footnote}{0}
\clearpage
\section{Introduction}~\label{intro}

The introduction of Receiver Operating Characteristics (ROC) to the machine learning community is often attributed to the work of~\citet{Spackman1989}. Since then, ROC analysis has gained popularity in the supervised learning domain, in part as a result of the drawbacks observed with accuracy-based evaluations of classifiers~\citep{Bradley97,Provost97,Provost1998,Huang2005,Fawcett06,Flach2010}, especially for class imbalanced problems. Currently, ROC analysis stands as a valuable tool to visualize, evaluate and compare the performance of different classifiers~\citep{Majnik2013,Orallo2013}.

Given a classifier and a dataset for which desired classification outcomes (i.e., actual class labels) are available, the first step towards performing a ROC analysis consists in deriving statistics that relate classifier predictions with the corresponding desired outcomes. In the case of a binary classification problem with a positive and a negative class, classifier predictions can be deemed True Positive~($TP$), False Positive ($FP$), True Negative~($TN$), or False Negative~($FN$) with respect to actual class labels.
% ----------------------- REMOVED IN REVISED VERSION --------------------------
%Such statistics are usually presented in the form of a confusion matrix (Figure~\ref{confusion_matrix}), where $P$ and $N$ give the total number of positive and negative examples and $P'$ and $N'$ indicate the total number of objects classified as positive and negative, respectively.
%
%\begin{figure}[htpb]
%\vspace{-0.5cm}
%\centering
%%\begin{minipage}{0.45\linewidth}
%%{\begin{center}
%\renewcommand\arraystretch{1.5}
%\setlength\tabcolsep{4pt}
%\begin{tabular}{c >{\bfseries}r @{\hspace{0.7em}}c @{\hspace{0.4em}}c @{\hspace{0.7em}}c}
%\multirow{12}{*}{\rotatebox{90}{\parbox{0.5cm}{\bfseries\centering Prediction}}} & 
%& \multicolumn{2}{c}{\bfseries \hspace*{-0.3cm}Actual} & \\\addlinespace[-0.04cm]
%& & \bfseries Positive & \bfseries Negative & \bfseries \\
%& \rotatebox[origin=c]{90}{\parbox{1.6cm}{\bfseries\centering Positive}} & \MyBoxx{$TP$} & %\MyBoxx{$FP$} & $P'$ \\[25pt]
%& \rotatebox[origin=c]{90}{\parbox{1.6cm}{\bfseries\centering Negative}}  & \MyBoxx{$FN$} & \MyBoxx{$TN$} & $N'$\\
%& & $P$ & $N$ &
%\end{tabular}
%\end{center}}
%\end{minipage}\hspace{1cm}
%
%\caption{Confusion matrix for a binary classification problem.}    
%\label{confusion_matrix}
%\end{figure}
% ------------------------------------------------------------------------------
If the classifier under evaluation produces as output a probability or a score for each object, representing its likelihood or degree of membership to a class, a ROC curve can be derived by plotting the values of False Positive Rate ($FPR = FP / N$) against those of True Positive Rate~($TPR = TP / P$), where $P$ and $N$ stand for the cardinality of the positive and negative classes, respectively. In this case, each point in the curve is associated with a classification threshold, which lies within the interval of the scores produced by the classifier. For each threshold value, objects are deemed as positive or negative according to their classification score relative to the threshold, and the respective TPR and FPR values are calculated and plotted, as illustrated in Fig.~\ref{fig:roc}(a). The diagonal line in this figure ($TPR = FPR$) accounts for the expected performance of a random classifier. A classifier with a curve close to the top-left corner of the graph is usually preferred, whereas a classifier with a curve below the diagonal line performs worse than random. By reversing the classification predictions, its new ROC curve will be mirrored around the diagonal line. For detailed discussions on ROC graphs, see e.g. \citep{Fawcett2004,Fawcett06,Flach2010}.

\begin{figure}[htpb]
\centering
\begin{sideways} \hspace{0.55cm} \scriptsize{True Positive Rate (TPR)} \end{sideways}\hspace{-0.05cm}
\includegraphics[keepaspectratio=true,width=0.3\linewidth]{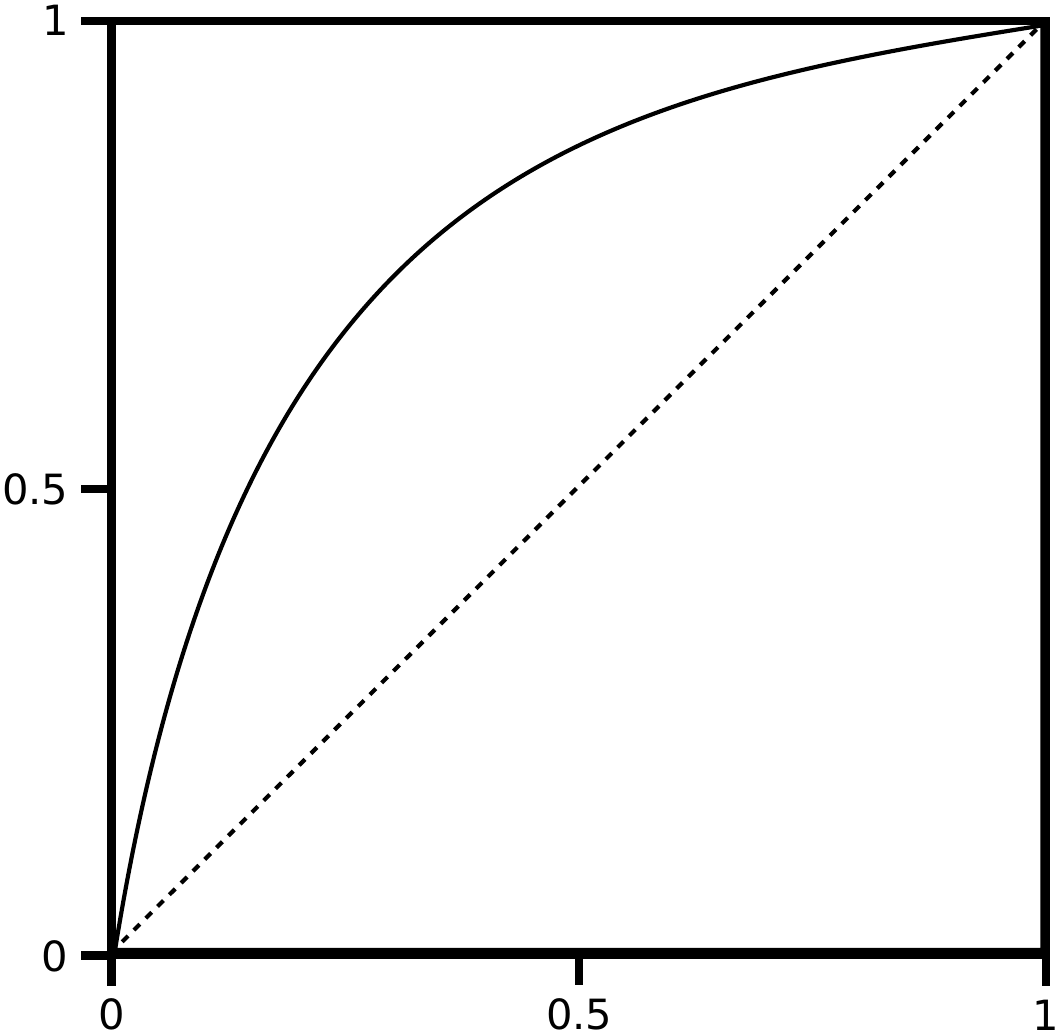}\hspace{2cm}
\begin{sideways} \hspace{0.55cm} \scriptsize{True Positive Rate (TPR)} \end{sideways}\hspace{-0.05cm}
\includegraphics[keepaspectratio=true,width=0.3\linewidth]{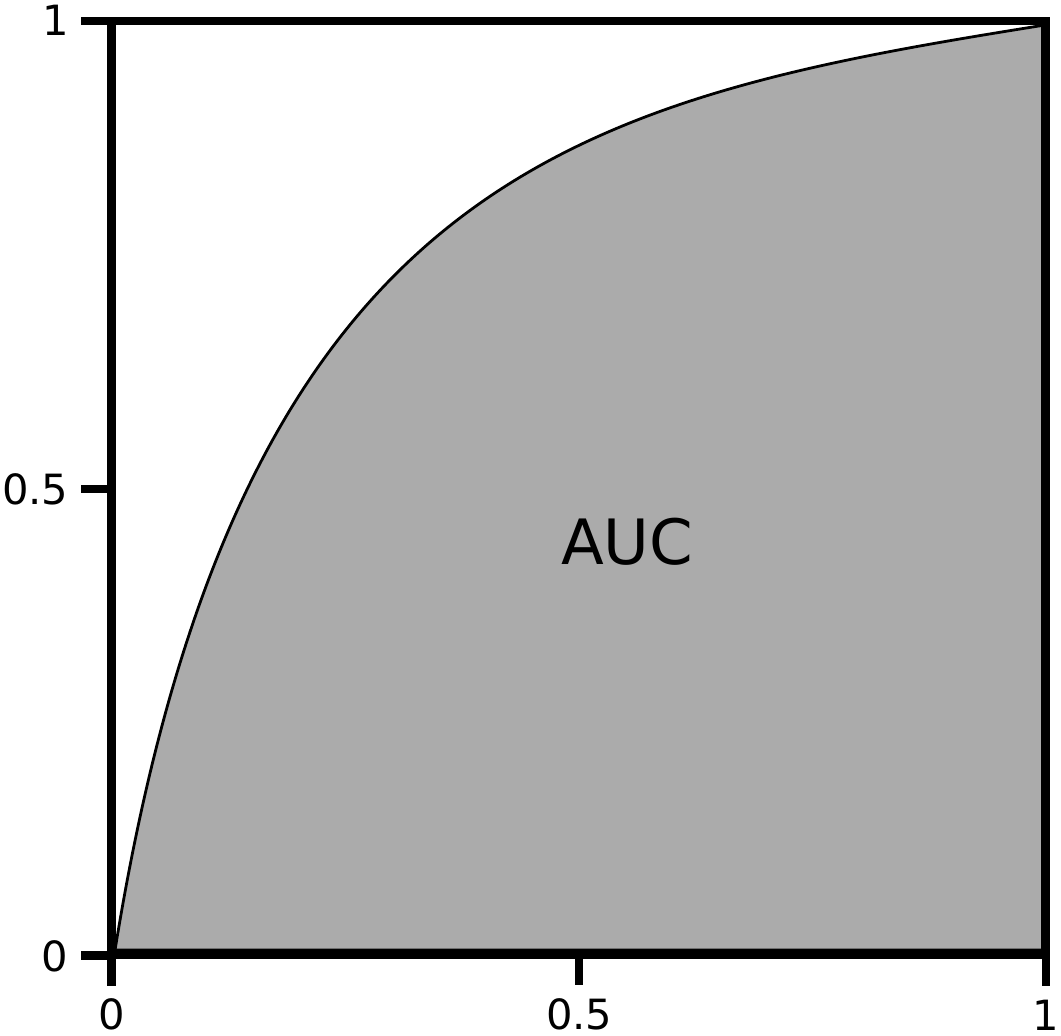}
\vspace*{-0.2cm}
\begin{flushleft} \hspace{2.55cm} \scriptsize{False Positive Rate (FPR)} \hspace{3.35cm} \scriptsize{False Positive Rate (FPR)}\end{flushleft}
\begin{flushleft} \hspace{3.92cm} (a) \hspace{6.2cm} (b)\end{flushleft}
\vspace{-0.3cm}
\caption{Examples of ROC graphs. A ROC curve (a) can be obtained from a scoring classifier. The dotted line in (a) corresponds to a classifier with performance comparable to that of a completely random classifier. In (b), the Area Under the Curve (AUC) of the corresponding curve is highlighted.}
\label{fig:roc}
\end{figure}

Even though the ROC graph has visual appeal, an aggregated scalar value is typically obtained, in order to make the comparison among different classifiers straightforward. A well-known choice, which is typically regarded as the most important statistics derived from ROC curves~\citep{Flach2010}, is the Area Under the (ROC) Curve~(AUC), as illustrated in Fig.~\ref{fig:roc}(b). The AUC of a classifier/model prediction consists of a single value in the~$[0,1]$ interval that, from a statistical standpoint, can be regarded as the probability that it will rank (or score) a randomly selected positive object higher than a randomly selected negative one~\citep{Fawcett06,Flatch2011}. From this observation it follows that, in general: (i) the larger the AUC value, the better is the performance of the classifier under evaluation; (ii) values of AUC around 0.5 indicate the expected performance of random classifiers\footnote{In fact, non-random classifiers can also exhibit such a performance~\citep{Flach2010}.}; and (iii) values below 0.5 indicate a worse than random classifier. 

Over the years, AUC became one of the standard measures employed to assess classifiers' performance. Nowadays, it is widely accepted that AUC should be favored over accuracy, based on both theoretical and empirical evidence. For instance, \citet{Huang2005} compared the evaluations obtained with these two measures regarding their consistency and discriminant power. Their results suggest that both measures have a high degree of consistency in their evaluations, i.e., they do not contradict each other, but AUC has a better discriminant power, i.e., it can discriminate between classification models when accuracy cannot.

In this paper we take a different perspective on AUC, by considering it in the unsupervised learning domain. More specifically, we elaborate on the use of AUC as an internal/relative measure of clustering quality~\citep{JaiDub88,Xu2009,Hennig2015}, which can be employed to evaluate and compare the results obtained from different clustering algorithms or parameterizations of a particular clustering algorithm. Hereafter we shall refer to this measure as Area Under the Curve for Clustering, or simply AUCC. The concept of AUC has been previously considered in the clustering scenario by \citet{JasCamCos12,JasCamCos13} and \citet{GiaBosPinUtr13}. In both cases, however, the AUC was employed within a limited scope, namely, to evaluate the agreement between proximity measures and the \emph{external labels} of a dataset. The goal was to evaluate proximity measures for clustering gene expression microarray data and did not include any theoretical or experimental evaluation of AUC, let alone as an internal/relative criterion.

In contrast, the AUCC measure studied in this paper is defined as an internal, relative clustering evaluation criterion. It operates on the set of all pairs of data objects, with their similarity playing the role of ``classification scores'' and whether or not they belong to the same cluster (in a candidate clustering solution to be assessed) playing the role of ``binary class labels''. In this setting, not only the area under the ROC curve can be computed as a quantitative measure of clustering quality, but visual-based, qualitative interpretation of the ROC curves themselves can also be undertaken.

As one of the main contributions in this paper, we theoretically show that the AUCC of a clustering solution has the same expected value as in classification (0.5) under the assumption of a relevant null model of random clusterings, regardless of the number of clusters or relative cluster sizes in the partitions under evaluation. As we will show, this is particularly desirable in scenarios comparing clustering results with different numbers of clusters. It is worth remarking that, while the study of the expected behavior of evaluation criteria for random clusterings (and the correction of their possible biases, commonly referred to as \emph{adjustment for chance}) has been extensively studied in the context of \emph{external} clustering evaluation, where a ground-truth clustering solution consisting of external labels exist --- see e.g. \citep{Romano2016}, this aspect has been surprisingly overlooked in the context of internal/relative (i.e. fully unsupervised) evaluation.

In addition to the above, we explore and theoretically elaborate on the result --- originally and preliminarily described in \citep{JaskowiakPhD2015} --- that the quantity defined here as AUCC is actually a linear transformation of the Gamma internal clustering validity criterion introduced more than 40 years ago by~\citet{BakHub75}, for which we also formally derive a theoretical expected value for chance clusterings. Strategies to handle ties in (dis)similarity values are discussed for both AUCC and Gamma in light of their linear relationship as well as their theoretical expected values.

The Gamma criterion was the best performer in a previous evaluation study by~\citet{Mil81}, in which 30 internal clustering validity criteria were assessed on the basis of their (dis)agreement --- as measured by correlation --- with external evaluations. Notwithstanding, Gamma's computational time includes a prohibitive $O(n^4)$ term, where $n$ is the number of data objects (dataset size). We show that AUCC has a significantly lower computational complexity, making it computationally tractable in real world problems involving datasets of practical relevance. This allowed us to carry out an empirical evaluation of AUCC in the context of relative clustering validation, relating its performance to that of 28 other commonly employed relative measures from the clustering literature.

The remainder of the paper is organized as follows. In Section~\ref{validation} we provide a brief overview of performance evaluation in cluster analysis, which is commonly referred to as clustering validity or validation. In Section~\ref{rocclus} we introduce AUCC as an internal/relative validity criterion for the unsupervised evaluation of clustering results. We then show that AUCC is equivalent to a linear transformation of the Gamma Index and that, under a null/random model assumption, both AUCC and Gamma have expected values that do not depend on the number (or the sizes) of clusters under evaluation. The section concludes with a discussion of how to handle ties in (dis)similarity values while preserving the theoretical findings in the paper.~An empirical study is carried out in Section~\ref{eval}, involving both quantitative evaluations provided by AUCC as well as qualitative evaluations based on visual inspection of ROC curves. Final remarks and conclusions are drawn in Section~\ref{conc}.

\section{Performance Evaluation in Cluster Analysis}\label{validation}

Most clustering algorithms from the literature will produce an output regardless of the existence of actual clusters in the data. Even if one assumes that clusters exist, their number and distributions are usually unknown. In order to avoid the use of spurious (i.e., meaningless or poor) clustering results, one can resort to clustering validation techniques \citep{Hennig2015}. According to \citet{JaiDub88}, clustering validation can be defined as the set of tools and procedures that are used in order to evaluate clustering results in a quantitative and objective manner.

Clustering validation techniques can be broadly divided into external, internal, and relative~\citep{JaiDub88,HalBatVaz01}. External criteria are mostly employed in the evaluation of clustering results against a desired clustering solution known beforehand~(i.e., a ground-truth). Although they are very useful for algorithm evaluation and comparison in controlled experiments, external criteria have limited applicability in practical cluster analysis scenarios, where a ground-truth doesn't exist \citep{FaeGueKriKroetal10,JasMouFurCam15}. Internal validity criteria rely their evaluation only on clustering assignments and the data themselves.~Internal criteria that are also relative can be used to assess and compare the quality of different partitions in a relative manner. For this reason, relative criteria are frequently employed in practical clustering applications, helping the selection of a final clustering solution for further inspection by a field practitioner.

The literature on relative validity criteria is extensive and a large number of measures have been proposed. These are usually conceived based on the idea that a good clustering solution (partition) should have compact and separate clusters~\citep{HalBatVaz01}. From different definitions of cluster compactness and separation, different relative validity measures arise. Back in the 1980's, \citet{Mil81,MilCoo85} compared the performance of 30 validity criteria, mostly relative ones. Since then, new measures have been introduced, e.g., \citet{Rou87,BezPal98,HalVaz08,MouJasCamZim14}, extensive reviews, assessments and evaluations of those have been performed, e.g., \citet{Maulik02,VenCamHru09,VenCamHru10,Arbelaitz2013}, and different implementations of the measures have been made available, e.g., \citet{clValid2008,NbClust,ClusterCrit2016}.

\section{The Area Under the Curve as an Internal/Relative Measure}\label{rocclus}

Consider a dataset $\mathbf{X} = \{\mathbf{x}_1,\dots,\mathbf{x}_n\}$ with $n$ objects embedded in a space where a measure of similarity  between pairs of objects can be defined (e.g., an Euclidean space with $d$ dimensions, i.e., $\mathbf{x}_i=\{x_{i1},\dots,x_{id}\}$ for $i = 1, \dots, n$). In addition, consider a clustering result in the form of a partition, that is, a labeling of the data into $2 \le k \le n-1$ mutually exclusive clusters.~Let~$\mathcal{C} = \{C_1,\dots,C_k\}$ denote this partition, with the following properties:
\begin{gather*}
{C_1} \cup \dots \cup {C_k} = \mathbf{X}, \\
{C_i} \neq \emptyset, \forall i,\\
{C_i} \cap {C_j} = \emptyset, \forall i,j \text{ with } i \neq j.
\end{gather*}
Notice that we can transform any clustering solution $\mathcal{C}$ as above into a \emph{pairwise} representation $\mathcal{C}^p$~(a binary relation) composed of $n(n-1)/2$ elements (object pairs), as follows:
\begin{displaymath}
\label{adap2}
\mathcal{C}^p(\mathbf{x}_i,\mathbf{x}_j) = \left\{ 
\begin{array}{l l}
  1 & \quad \mbox{if } \exists l: \mathbf{x}_i,\mathbf{x}_j \in C_l,\\ \noalign{\medskip}
  0 & \quad \mbox{otherwise}.\\
\end{array} \right.
\end{displaymath}
\noindent Let $\mathbf{D}$ be a pairwise similarity matrix of the objects in dataset $\mathbf{X}$, from which a clustering solution $\mathcal{C}$ to be evaluated was derived. The binary relation $\mathcal{C}^p$ of $\mathcal{C}$ can be used, along with the pairwise similarities $\mathbf{D}$, as input to ROC analysis. The rationale behind this type of evaluation is that object pairs belonging to the same cluster in a good partition $\mathcal{C}$ should have higher similarities (or, conversely, lower dissimilarities) than those belonging to different clusters.
% A probabilistic interpretation of this intuition is obtained if dissimilarities are transformed into similarities and normalized into the $[0,1]$ interval. In this case, the larger the similarity value, the higher is the probability that the corresponding pair of objects should belong to the same cluster. 

Once a clustering solution (i.e., a partition) is available for a given dataset, its corresponding Area Under the Curve for Clustering (AUCC) can be computed with the following procedure:

\begin{enumerate}
  \item From the original dataset, compute a similarity matrix of the objects.
  \item Obtain two arrays, which indicate, for each pair of objects, their pairwise:
  \begin{enumerate}
    \item Similarity: readily available from the similarity matrix;
    \item Clustering: 1 if the pair is in the same cluster; 0 otherwise.
    
  \end{enumerate}
  \item Provide the two arrays as input to a standard ROC Analysis procedure in order to obtain the corresponding AUC of the clustering solution, which in this particular context we refer to as Area Under the Curve for Clustering (AUCC). The relation to the usual ROC Analysis in classification is straightforward: similarity values correspond to ``classification thresholds'' whereas pairwise clustering memberships (i.e., binary labels) correspond to the ``true classes''.
\end{enumerate}

The toy example in  Figure~\ref{toy} exemplifies the whole process.

\begin{figure}[!h]

\begin{minipage}{0.6\linewidth} 
\flushleft
\subfloat[Data points and partition with two clusters]{\hspace{0.1cm}\includegraphics[scale=0.35,valign=c]{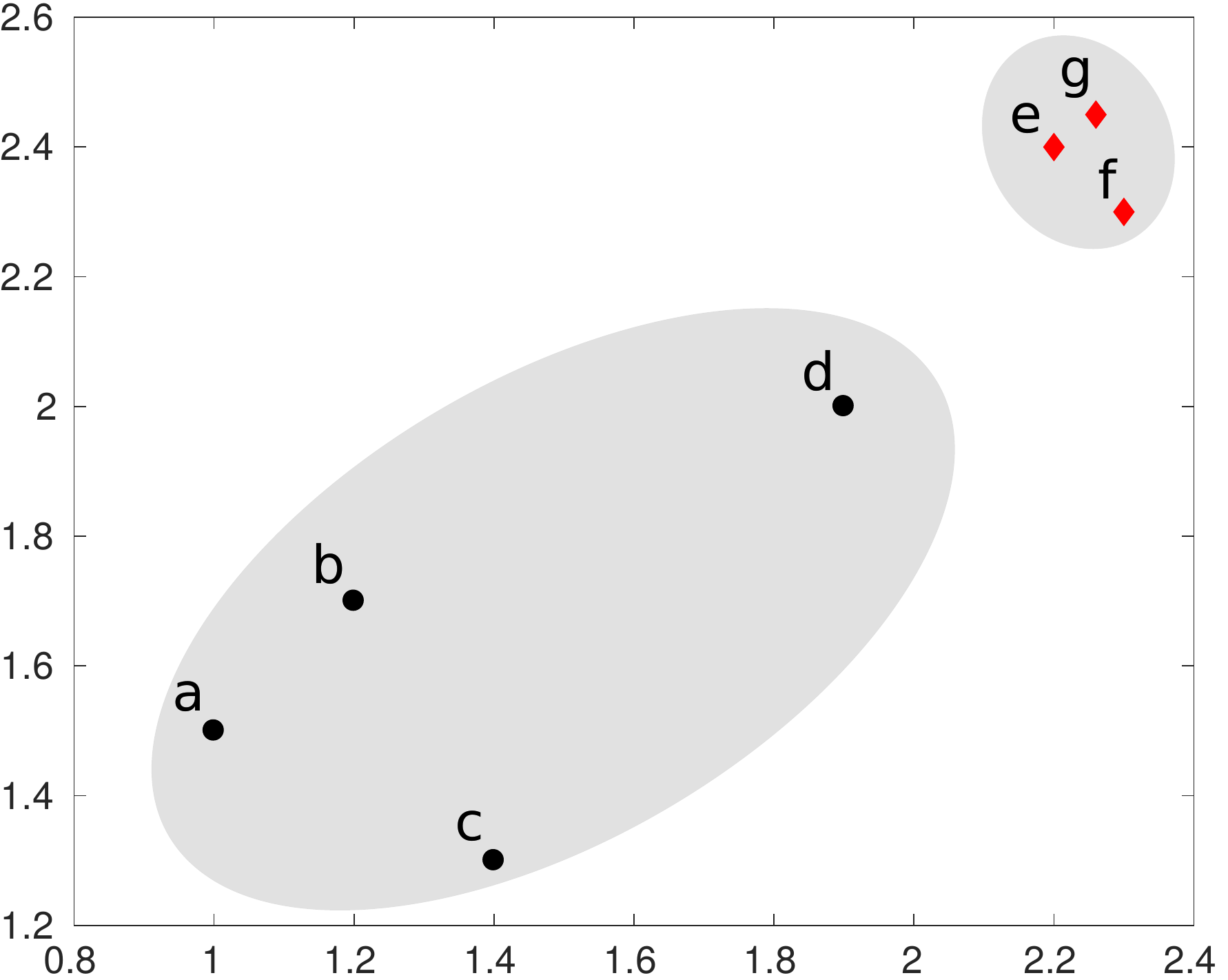}}
\vspace{0.2cm}

\flushleft
\subfloat[Similarity matrix]{ \setlength{\tabcolsep}{4pt}\renewcommand{\arraystretch}{1.2}
\small
\begin{tabular}{c|ccccccc}
&\textbf{a}&\textbf{b}&\textbf{c}&\textbf{d}&\textbf{e}&\textbf{f}&\textbf{g}\\\hline
\textbf{a}&1.00 & 0.82 & 0.72 & 0.35 & 0.05 & 0.03 & 0.00      \\
\textbf{b}&0.82 & 1.00 & 0.72 & 0.52 & 0.23 & 0.20 & 0.18 \\
\textbf{c}&0.72 & 0.72 & 1.00 & 0.45 & 0.14 & 0.15 & 0.09 \\
\textbf{d}&0.35 & 0.52 & 0.45 & 1.00 & 0.68 & 0.68 & 0.63 \\
\textbf{e}&0.05 & 0.23 & 0.14 & 0.68 & 1.00 & 0.91 & 0.95 \\
\textbf{f}&0.03 & 0.20 & 0.15 & 0.68 & 0.91 & 1.00 & 0.90 \\
\textbf{g}&0.00      & 0.18 & 0.09 & 0.63 & 0.95 & 0.90 & 1.00
\end{tabular}}  
\end{minipage}
\begin{minipage}{0.25\linewidth}\small
\renewcommand{\arraystretch}{0.8}\setlength{\tabcolsep}{5pt}
\subfloat[Arrays of pairwise clustering and  similarity for all object pairs]{
\begin{tabular}{ccc}
\toprule
&\multicolumn{2}{c}{Pairwise}\\\cmidrule{2-3}
Pair & Clustering & Similarity                   \\\midrule
\textbf{ab} & 1 & 0.82               \\
\textbf{ac} & 1 & 0.72  \\
\textbf{ad} & 1 & 0.35  \\
\textbf{ae} & 0 & 0.05 \\
\textbf{af} & 0 & 0.03 \\
\textbf{ag} & 0 & 0.00                  \\
\textbf{bc} & 1 & 0.72  \\
\textbf{bd} & 1 & 0.52  \\
\textbf{be} & 0 & 0.23  \\
\textbf{bf} & 0 & 0.20  \\
\textbf{bg}& 0 & 0.18  \\
\textbf{cd} & 1 & 0.45  \\
\textbf{ce} & 0 & 0.14  \\
\textbf{cf} & 0 & 0.15  \\
\textbf{cg} & 0 & 0.09 \\
\textbf{de} & 0 & 0.68  \\
\textbf{df} & 0 & 0.68  \\
\textbf{dg} & 0 & 0.63  \\
\textbf{ef} & 1 & 0.91  \\
\textbf{eg} & 1 & 0.95  \\
\textbf{fg} & 1 & 0.90\\\bottomrule
%&$\uparrow$&$\uparrow$ \\
%\multicolumn{3}{r}{Input to ROC Analysis}\\
\end{tabular}    }
\end{minipage}  
\caption{Illustrative example of the Area Under the Curve for Clustering (AUCC) procedure: (a) toy dataset with an arbitrary clustering solution, in which clusters are indicated by a combination of colors and shapes (red diamonds / black circles); (b) similarity matrix between the data objects of the dataset; (c) objects are considered in a pairwise fashion and each pair is associated with the corresponding similarity value and cluster assignment (1 if the pair belongs to the same cluster, 0 otherwise). These pairwise representations can be provided as input to a standard ROC Analysis procedure, resulting in an AUC of 0.9167. This is the AUCC assessment of candidate solution (a).}
\label{toy}
\end{figure}

One ought to note that, in the context of supervised classification, a solution (prediction) is given as real-valued classification scores, while the actual class labels (target result) are represented by binary class labels. In our setup, a clustering solution with any number of clusters is represented as a binary pairwise clustering array, whereas the referential target is represented by real-valued pairwise (dis)similarities intrinsic to the data. Moreover, in the case of clustering, we deal with pairs of objects, as opposed to single objects considered in the traditional classification scenario.

Although the whole validation procedure is described in terms of (dis)similarities, it is important to note that it is not tied to any particular  measure. The only requirements are that: (i)~the (dis)similarity employed in the validation procedure must be the very same (or equivalent) to the one employed during the clustering phase and; (ii) the measure must satisfy the symmetry, positivity and identity properties. Each measure captures a different aspect of the data and any specific choice will depend on the application scenario in hand~\citep{JaiDub88,JasCamCos12,JasCamCos14}. Yet, regardless of the proximity measure in use, the AUCC validation index captures the same essence, that is, it favors partitions in which objects in the same cluster are more similar than objects from different clusters.

\subsection{Equivalence Between AUCC and Baker \& Hubert's Gamma}\label{rocgammaequi}

In this section we discuss the equivalence between the AUCC of a clustering result and its evaluation with the Gamma Index, which is a relative validity criterion introduced by \citet{BakHub75}, based on the Goodman-Kruskal correlation coefficient~\citep{GooKru54}. We initially show that AUCC and Gamma are equivalent to a linear transformation of one another when there are no ties in proximity values (other than self-proximity values).\footnote{This result was originally and preliminarily described in \citep{JaskowiakPhD2015}. An equivalent result, involving the relation between AUC and the 1954 Goodman-Kruskal's rank correlation, was recently rediscovered by \cite{Higham2019} in an unrelated context, involving measures of resolution in meta-cognitive studies.} We then show that the original Gamma Index can be extended in an intuitive way to account for scenarios in which ties may exist, while preserving both the exact relation with AUCC as well as its expected value under a null hypothesis of random clustering solutions. The theoretical expected values for both Gamma and AUCC are also derived in this section as part of our contributions.

Before we proceed, let us recall the definition of the Gamma Index, which can be written as:

\begin{equation} \label{eq:gamma}
\gamma = \frac{s_+ -s_-}{s_+ +s_-},
\end{equation}

\noindent or, equivalently, $1 - \frac{2s_-}{s_{total}}$, with $s_{total} = s_+ -s_-$ and:

\begin{align}
s_+ &= \frac{1}{2}\sum_{l=1}^k \sum_{\substack{\mathbf{x}_i,\mathbf{x}_j \in C_l \\ \mathbf{x}_i \neq \mathbf{x}_j}} \frac{1}{2}\sum_{m=1}^k \sum_{\substack{\mathbf{x}_r \in C_m \\ \mathbf{x}_s \not\in C_m}} \delta ( ||\mathbf{x}_i - \mathbf{x}_j|| < ||\mathbf{x}_r - \mathbf{x}_s ||), \label{s+}\\
s_- &= \frac{1}{2}\sum_{l=1}^k \sum_{\substack{\mathbf{x}_i,\mathbf{x}_j \in C_l \\ \mathbf{x}_i \neq \mathbf{x}_j}} \frac{1}{2}\sum_{m=1}^k \sum_{\substack{\mathbf{x}_r \in C_m \\ \mathbf{x}_s \not\in C_m}} \delta ( ||\mathbf{x}_i - \mathbf{x}_j|| > ||\mathbf{x}_r - \mathbf{x}_s ||),
\end{align}

\noindent where $\delta(.)$ is equal to $1$ if the inequality is satisfied, $0$ otherwise. In the equation above $s_+$ ($s_-$) is the count of occurrences of \emph{object pairs} from the same cluster that have a smaller (greater) dissimilarity $|| \cdot ||$ than that of \emph{object pairs} that belong to different clusters. Intuitively, $s_+$ is expected to account for well placed pairs of objects, whereas $s_-$ should account for misplaced pairs of objects.

It is important to note that the formulation of the Gamma Index as presented above is computationally very expensive, turning out to be prohibitive in most practical applications of cluster analysis. Specifically, it has complexity $O({n^4}/{k})$, where $n$ is the number of data objects and $k$ is the number of clusters in the candidate solution under evaluation~\citep{VenCamHru10}\footnote{Assuming that (a) all dissimilarities $||\cdot||$ are given in advance (otherwise an additional dissimilarity cost would be required --- $O(n^2d)$ in case of Euclidean distance, where $d$ is the dimension of the data space), and (b) cluster sizes are balanced (all proportional to $n/k$, possibly differing by a constant factor)~\citep{VenCamHru10}.
% In the absence of ties in proximity values, the relationship between Gamma and Goodman-Kruskal originally pointed out in \citet{BakHub75} could alternatively be evoked to compute Gamma as a rank correlation between two sequences of length $O(n^2)$, however, the final complexity would still be $O({n^4})$.
}.
Theorem~\ref{theorem:aucgamma} describes the relation between the outcomes of the evaluation of such a candidate clustering solution by both the Gamma Index as well as the AUCC, whereby the Gamma Index can be computed with a significantly lower computational cost \citep{JaskowiakPhD2015}:

\begin{theorem}~\label{theorem:aucgamma}
Assume that there are no ties in proximity values (except for self-proximity values), i.e., there aren't two pairs of distinct data objects whose (dis)similarity values are exactly the same. Then, the Area Under the ROC Curve for Clustering (AUCC) obtained from the evaluation of a clustering result is equal to $(1 + \gamma) / 2$, where $\gamma$ is the value from the evaluation of the same clustering result with the Gamma criterion from~\citet{BakHub75}, given by Equation~\eqref{eq:gamma}.
\end{theorem}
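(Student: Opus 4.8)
The plan is to invoke the classical Mann--Whitney/Wilcoxon characterization of the AUC and to show that, once the pairwise clustering evaluation is phrased in those terms, the resulting statistic coincides exactly with $s_+/(s_+ + s_-)$, which is the quantity $(1+\gamma)/2$ asserted by the theorem. First I would fix the correspondence between the two settings: in the pairwise ROC formulation each ``instance'' is an object pair, the binary label of a pair is its value $\mathcal{C}^p$ ($1$ for a same-cluster pair, i.e.\ a ``positive'', and $0$ for a different-cluster pair, i.e.\ a ``negative''), and the ``score'' is the pairwise similarity. Let $P$ denote the number of same-cluster pairs and $N$ the number of different-cluster pairs, so that there are $P$ positive and $N$ negative instances, and $P\cdot N$ positive--negative comparisons in total.

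Next I would recall the standard fact that, in the absence of ties among the scores, the AUC equals the fraction of positive--negative instance pairs in which the positive instance receives the strictly higher score:
\begin{equation*}
\mathrm{AUCC} = \frac{1}{P\cdot N} \sum_{p\,\text{positive}} \sum_{q\,\text{negative}} \delta\big(\mathrm{score}(p) > \mathrm{score}(q)\big).
\end{equation*}
Since a higher similarity is the same as a smaller dissimilarity, the numerator counts precisely those (same-cluster pair, different-cluster pair) comparisons in which the same-cluster pair has the smaller dissimilarity $\|\cdot\|$.

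I would then connect this count to $s_+$ and $s_-$. The two factors of $\tfrac12$ appearing in Equation~\eqref{s+} and in the companion definition of $s_-$ serve only to collapse the ordered enumeration of pairs back into an unordered one; hence $s_+$ (respectively $s_-$) is exactly the number of same-cluster/different-cluster comparisons in which the same-cluster pair has the smaller (respectively greater) dissimilarity. Under the no-ties hypothesis every such comparison is strictly resolved, so $s_+ + s_- = P\cdot N$ and the numerator above is exactly $s_+$; therefore $\mathrm{AUCC} = s_+/(s_+ + s_-)$. A one-line simplification then closes the argument:
\begin{equation*}
\frac{1+\gamma}{2} = \frac{1}{2}\left(1 + \frac{s_+ - s_-}{s_+ + s_-}\right) = \frac{s_+}{s_+ + s_-} = \mathrm{AUCC}.
\end{equation*}

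The main obstacle is not this final algebra but the careful bookkeeping in the middle step: one must track both $\tfrac12$ factors together with the similarity-versus-dissimilarity reversal to be certain that the AUC numerator equals $s_+$ (and not $s_-$, nor $2s_+$, nor $P\cdot N - s_-$), and it is precisely the no-ties assumption that guarantees $s_+ + s_- = P\cdot N$ with no residual tie contributions --- such contributions would otherwise trigger the $\tfrac12$-weighted tie correction that the standard AUC formula applies to equal-score comparisons, breaking the clean identity.
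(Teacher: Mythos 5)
Your proposal is correct and follows essentially the same route as the paper's own proof: both rest on the Mann--Whitney/probabilistic characterization of the AUC as the fraction (empirical probability) of positive--negative comparisons in which the positive instance is ranked higher, identify that fraction with $s_+/(s_+ + s_-)$ under the no-ties assumption, and finish with the same one-line algebraic identity $(1+\gamma)/2 = s_+/(s_+ + s_-)$. Your explicit bookkeeping of the $\tfrac12$ factors and the similarity-versus-dissimilarity reversal is a slightly more careful rendition of a step the paper treats informally, but it is not a different argument.
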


\begin{proof}
First, let us consider a binary supervised classification problem and a scoring classifier, that is, a classifier that outputs a real-valued score for any data object given as input. Although classification scores may not be interpreted as strict probabilities, the higher their value the higher is the expectancy that the corresponding object should belong to the reference/target class. Scores can thus be associated with a threshold in order to deem objects as negative or positive depending on whether they are below the threshold or not, respectively. Scores can also be used to derive a ranking of the objects: starting from the highest score value, one can rank the objects from~1~(associated with the highest possible rank/score) to a maximum integer associated with the lowest possible rank/score (equal to the number of objects, if there are no ties). Now let us consider the random selection of one positive and one negative object (with respect to their actual class labels).~In this case, the AUC obtained with the evaluation of such a classifier has the interesting statistical interpretation of being equivalent to the probability that it will rank the randomly selected positive object \emph{higher} than the randomly selected negative one~\citep{Fawcett06}.

In the context of clustering validation, each ``object'' of the evaluation corresponds, in fact, to a pair of data objects from the clustering result. The positive and negative classes indicate whether (1) or not (0) a pair of objects belongs to the same cluster, respectively. Finally, scores readily translate into similarity values between pairs of objects.
% Although the similarity values may not be interpreted as strict probabilities\footnote{This discussion is provided by \citet{Fawcett06} in the context of classification scores.}, the higher their value, the higher is the expectancy that the pair should belong to the same cluster.

Recall from the definition of Gamma that terms $s_+$ and $s_-$ are equal to the \emph{number} of occurrences of positive (1) pairs from the cluster solution having a higher ($s_+$) or a lower ($s_-$) similarity value than negative (0) pairs, respectively. Provided that there are no ties in similarity values, candidate occurrences will be counted either to $s_+$ or to $s_-$ (there is no alternative outcome), so the total number of possible counts is \mbox{$s_{total} = s_+ + s_-$,} which depends exclusively on the dataset size ($n$) as well as on the number of clusters ($k$) and their (im)balance (i.e., relative sizes) in the partition under evaluation. Notice that we can define the empirical probabilities with relative frequencies of $s_+$ and $s_-$ by dividing these values by $s_{total}$. Let us denote such empirical probabilities as $P(s_+)$ and $P(s_-)$. Given that such values are all obtained by dividing $s_+$ and $s_-$ by a constant value ($s_{total}$), we can rewrite Gamma as:
\begin{align*}
\gamma &= \frac{P(s_+) - P(s_-)}{P(s_+) + P(s_-)}.\\
\intertext{Since $P(s_+) + P(s_-) = 1$, we have:}
\gamma &= P(s_+) - P(s_-),\\
&= P(s_+) - (1 - P(s_+)),\\
&= P(s_+) - 1 + P(s_+),\\
&= 2P(s_+) - 1.\\
\end{align*}

\noindent Notice that $P(s_+)$ is the empirical probability of ranking a positive example (i.e., a pair of data objects belonging to the same cluster) higher than a negative one, which is exactly the same estimate as the Area Under the ROC Curve value~\citep{Fawcett06}. Hence, $\text{AUCC} = P(s_+) = (\gamma + 1) / 2$. %\qed
\end{proof}

As a side note, given that $\text{AUC} = (\text{Gini} + 1)/2$ \citep{Hand01,Fawcett06}, then the value obtained with the application of Gamma is the very same one obtained with the application of the Gini Coefficient~\citep{Gin12,CerVer12}. To the best of our knowledge, the relation between Baker-Hubert's Gamma and Gini had not been established elsewhere before. Similarly, notice that there is a known relation between AUC and the classic Wilcoxon-Mann-Whitney \emph{U-statistic}: the U-statistic can be defined as a count of the number of times that observations from one sample (say, class positive) are ranked higher than observations from the other sample (say, class negative) according to their scores; by averaging this quantity over all possible pairwise comparisons, the result can be shown to be exactly equivalent to the AUC \citep{Hanley1982,Mason2002} and, by transitivity using Theorem~\ref{theorem:aucgamma}, also equivalent to Gamma.

{\bf Computational Complexity:} As previously mentioned, the original formulation of Gamma~\citep{BakHub75} is computationally prohibitive for most real-world applications, due to its $O({n^4}/{k})$ complexity~\citep{VenCamHru10}. For instance, this has prevented its evaluation in datasets as small as $500$ objects in the experimental study performed by~\citet{VenCamHru10}. As pointed out by \citet{Fawcett06}, computing the AUC for a binary classification problem with $n$ objects has~$O(n \log n)$ complexity. Note that in the case of clustering evaluation we are dealing with pairs of objects, thus we have an $O(n^2 \log n)$ time complexity\footnote{Apart from the cost to obtain the dissimilarity matrix, $\mathbf{D}$, which is also required by Gamma.} for the Area Under the Curve for Clustering (AUCC), a considerable reduction when compared to the original Gamma.

\subsection{Expected Value Property}\label{expected_val_prop}

In this section we show that, given a dataset $\mathbf{X} = \{\mathbf{x}_1,\dots,\mathbf{x}_n\}$ of finite size $n$ and a dissimilarity measure $||\mathbf{x}_i - \mathbf{x}_j||$ associated with each pair of data objects $\mathbf{x}_i$ and $\mathbf{x}_j$, the expected value of the Gamma Index under a null distribution of random clustering solutions is zero and, accordingly, the expected value of AUCC is 0.5. This results holds true for any given value of $k$, i.e., it is valid irrespective of the number of clusters assumed in the null model. It also holds true independently of the dataset size $n$ and, as we will show, irrespective of relative cluster sizes, i.e., cluster (im)balance. This is a desirable property for two reasons: (a) it allows for a better interpretation of AUCC values, that is, how far/close a given candidate clustering solution is from random; and, more importantly, (b) it ensures that the use of AUCC as an internal/relative validity criterion~is not biased by the number or (im)balance of clusters in the partitions being compared.

Before we formalize these results, it is paramount to stress that the null model is defined at the individual data object level, as a random assignment of objects to clusters; it is \emph{not} directly defined in terms of the binary relation $\mathcal{C}^p$ on which AUCC relies because, by randomly assigning binary labels to pairs of objects, we would actually account for in our calculations encodings that do not correspond to any feasible partition of the data. For instance, consider a dataset $\{\mathbf{x}_1$, $\mathbf{x}_2$, $\mathbf{x}_3\}$, case in which there are three possible pairs, $(\mathbf{x}_1,\mathbf{x}_2)$, $(\mathbf{x}_1,\mathbf{x}_3)$, $(\mathbf{x}_2,\mathbf{x}_3)$. A hypothetical random encoding $[1~0~1]$ says that $\mathbf{x}_1$ and $\mathbf{x}_2$ are in the same cluster, and so do $\mathbf{x}_2$ and $\mathbf{x}_3$, whereas $\mathbf{x}_1$ and $\mathbf{x}_3$ are in different clusters, which is impossible. This prevents us from directly evoking the well-known property that the expected AUC for chance is 0.5, because this result assumes that each and every element has, independently of other elements, the same fixed probabilities of being assigned to each class. The example above shows that the elements of the binary relations $\mathcal{C}^p$ do not satisfy this assumption as they are not independent. Note that independence is also a critical assumption behind the use of the \emph{U-statistic}, which is equivalent to the AUC \citep{Mason2002}. Our next result circumvents this hurdle by working at the object (rather than pairwise object) level:

\begin{theorem}~\label{theorem:Gammazero}
Assuming a null model in which every clustering solution with $k$ clusters (as a valid partition of $n$ objects) is equally likely, the expected value of the Gamma Index is zero ($\gamma = 0$).
\end{theorem}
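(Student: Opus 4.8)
The plan is to prove that $\mathbb{E}[\gamma] = 0$ under the uniform null model on partitions of $n$ objects into $k$ non-empty clusters, treating the dissimilarities $||\mathbf{x}_i - \mathbf{x}_j||$ as fixed and only the partition $\mathcal{C}$ as random. The first thing to flag is that $\gamma = (s_+ - s_-)/s_{total}$ is a ratio in which \emph{both} numerator and denominator are random through $\mathcal{C}$, so one cannot simply push the expectation through. The structural observation that resolves this is that $s_{total} = s_+ + s_-$ equals the total number of comparisons between a same-cluster (``within'') object pair and a different-cluster (``between'') object pair: writing $W(\mathcal{C})$ and $B(\mathcal{C})$ for the sets of within- and between-cluster pairs, we have $s_{total} = |W(\mathcal{C})|\cdot|B(\mathcal{C})|$, with $|W(\mathcal{C})| = \sum_{l} \binom{n_l}{2}$ a function of the cluster sizes $n_1,\dots,n_k$ alone. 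I would therefore condition on the \emph{size profile} (the multiset $\{n_1,\dots,n_k\}$): within each profile class $s_{total}$ is constant, so $\mathbb{E}[\gamma \mid \text{profile}] = \mathbb{E}[s_+ - s_- \mid \text{profile}]/s_{total}$, and by the law of total expectation it suffices to show $\mathbb{E}[s_+ - s_- \mid \text{profile}] = 0$ for every fixed profile.

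Next I would rewrite the numerator as a sum indexed by unordered pairs of distinct object pairs. Fixing two distinct object pairs $e$ and $e'$ with, without loss of generality, dissimilarity of $e$ smaller than that of $e'$ (well defined by the no-ties assumption), this pair contributes to $s_+ - s_-$ only when one is a within-cluster pair and the other a between-cluster pair: it contributes $+1$ when $e \in W,\, e' \in B$ (the closer pair is the within-cluster one, counted in $s_+$), $-1$ when $e' \in W,\, e \in B$ (counted in $s_-$), and $0$ otherwise. Hence
\[
\mathbb{E}[s_+ - s_- \mid \text{profile}] = \sum_{\{e,e'\}} \Big( \Pr(e \in W,\, e' \in B \mid \text{profile}) - \Pr(e' \in W,\, e \in B \mid \text{profile}) \Big),
\]
and it remains to prove that the two probabilities in each summand coincide.

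The crux is a symmetry argument. For any two distinct object pairs $e = \{i,j\}$ and $e' = \{r,s\}$ I would exhibit a permutation $\pi$ of the object indices $\{1,\dots,n\}$ that swaps $e$ and $e'$ as sets: if they are disjoint, take $\pi$ to swap $i \leftrightarrow r$ and $j \leftrightarrow s$; if they share an endpoint, say $e = \{i,j\}$ and $e' = \{i,s\}$, take $\pi$ to swap $j \leftrightarrow s$ and fix everything else. Relabeling objects by $\pi$ maps any partition to one with identical cluster sizes, so the uniform null measure restricted to a fixed profile is $\pi$-invariant; moreover $e \in W(\pi(\mathcal{C})) \iff \pi^{-1}(e) \in W(\mathcal{C})$. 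Performing this change of variables turns the event $\{e \in W,\, e' \in B\}$ into $\{e' \in W,\, e \in B\}$, so the two conditional probabilities are equal and every summand vanishes. This yields $\mathbb{E}[s_+ - s_- \mid \text{profile}] = 0$, hence $\mathbb{E}[\gamma \mid \text{profile}] = 0$ for every profile, and therefore $\mathbb{E}[\gamma] = 0$; combined with Theorem~\ref{theorem:aucgamma} this also gives the expected AUCC of $0.5$.

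I expect the main obstacle to be precisely the ratio structure: the naive route of computing $\mathbb{E}[s_+ - s_-]$ and $\mathbb{E}[s_{total}]$ separately fails, because $s_{total}$ fluctuates across partitions with different size profiles (\eg balanced versus imbalanced splits of the same $n$ and $k$), so $\mathbb{E}[\gamma]\neq \mathbb{E}[s_+-s_-]/\mathbb{E}[s_{total}]$ in general. Conditioning on the size profile is what makes the denominator deterministic and legitimizes dividing through by it. A secondary point worth stating explicitly is the role of the no-ties assumption: tied object-pairs contribute $0$ to the numerator and so never threaten the symmetry argument, but ties \emph{would} break the identity $s_{total} = |W|\cdot|B|$ within a profile class, so the clean reduction relies on (or must be adapted to) the tie-handling conventions discussed elsewhere in the paper.
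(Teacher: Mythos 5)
Your proof is correct, and it reaches the same two structural waypoints as the paper's --- condition on the cluster-size profile so that $s_{total}$ is deterministic (the paper does exactly this, first fixing the profile and then averaging over profiles via the weighted-expectation identity in its ``extended null model'' paragraph), and then show $E\{s_+\}=E\{s_-\}$ by exchangeability --- but the cancellation step itself is organized differently. The paper rewrites $s_+$ as sums over \emph{triples and quadruples of objects} with partition-dependent indicator functions $\mu_{\mathcal{C}_k}$ and $\phi_{\mathcal{C}_k}$, argues that the expected indicator value is the same constant for every tuple, factors it out, and then cancels the remaining purely geometric $\delta$-sums via the index bijections $(i,j,s)\leftrightarrow(i,s,j)$ and $(i,j,r,s)\leftrightarrow(r,s,i,j)$. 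You instead decompose $s_+-s_-$ over \emph{unordered pairs of object pairs} $\{e,e'\}$ and kill each summand individually by exhibiting an explicit object relabeling that swaps $e$ and $e'$ and preserves the uniform measure on the profile class; this gives $\Pr(e\in W, e'\in B)=\Pr(e'\in W, e\in B)$ termwise. The two are really the same symmetry seen from opposite ends (the paper's claim that $E\{\phi_{\mathcal{C}_k}(i,j,r,s)\}$ is index-independent \emph{is} your swap identity), but your version is arguably tighter: it needs only the pairwise swap invariance rather than full index-independence, it justifies that invariance constructively via the permutation (handling the shared-endpoint case, which the paper's triple/quadruple split absorbs silently), and it makes the ratio-of-random-variables pitfall and the role of $s_{total}=|W|\cdot|B|$ explicit. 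The paper's version, in exchange, yields the closed-form intermediate expressions for $E\{s_+\}$ and $E\{s_-\}$ separately, which it reuses when discussing how the optimistic/pessimistic tie conventions would bias these expectations. Your closing remark on ties is also consistent with the paper's treatment: with the even split of $s_0$ between $s_+$ and $s_-$ the symmetry (and hence the zero expectation) survives, while the identity $s_++s_-=|W|\cdot|B|$ must be replaced by $s_++s_-+s_0=|W|\cdot|B|$.
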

\begin{corollary}~\label{corollary:aucchalf}
Assuming a null model in which every clustering solution with $k$ clusters (as a valid partition of $n$ objects) is equally likely, the expected value of AUCC is $0.5$.
\end{corollary}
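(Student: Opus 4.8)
The plan is to obtain the corollary as an immediate consequence of Theorem~\ref{theorem:aucgamma} together with Theorem~\ref{theorem:Gammazero}: since $\text{AUCC} = (1+\gamma)/2$ holds pointwise for every tie-free clustering, linearity of expectation gives $\mathbb{E}[\text{AUCC}] = (1 + \mathbb{E}[\gamma])/2$, so once $\mathbb{E}[\gamma]=0$ is established the value $0.5$ follows. Hence the real work is in Theorem~\ref{theorem:Gammazero}, which I would prove first. The subtlety to keep in mind throughout is that $\gamma = (s_+ - s_-)/s_{total}$ is a \emph{ratio} of random quantities, so one cannot simply apply linearity of expectation to $s_+$ and $s_-$ separately: the denominator $s_{total} = s_+ + s_-$ is itself partition-dependent.

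My first step would be to neutralize the ratio by conditioning. Note that $s_{total}$ equals the number of (within-cluster pair, between-cluster pair) comparisons, i.e. $s_{total} = W\cdot B$ with $W = \sum_{l}\binom{n_l}{2}$ within-cluster pairs and $B = \binom{n}{2} - W$ between-cluster pairs; this depends on the partition \emph{only} through the multiset of cluster sizes $(n_1,\dots,n_k)$. I would therefore condition on the cluster-size profile: within each profile class $s_{total}$ is a positive constant, so $\mathbb{E}[\gamma \mid \text{profile}] = \mathbb{E}[s_+ - s_- \mid \text{profile}]/s_{total}$, and by the law of total expectation it suffices to show $\mathbb{E}[s_+ - s_- \mid \text{profile}] = 0$ for every profile. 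Crucially, the uniform distribution over partitions with a fixed size profile is invariant under relabelling of objects (exchangeability), a property I will exploit next.

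The heart of the argument is a symmetry (pairing) computation. I would rewrite $s_+ - s_-$ as a sum over unordered pairs of distinct object-pairs $\{p,q\}$, oriented so that $\|p\| < \|q\|$ (ties excluded by hypothesis): such a term contributes $+1$ when $p$ is a within-pair and $q$ a between-pair, $-1$ when $p$ is between and $q$ within, and $0$ otherwise. Taking conditional expectations, each term reduces to $\Pr(p\ \text{within},\, q\ \text{between}) - \Pr(p\ \text{between},\, q\ \text{within})$, and I would show this vanishes by exhibiting, for each $\{p,q\}$, an object permutation that swaps the two events. When $p$ and $q$ are disjoint, say $p=\{a,b\}$ and $q=\{c,d\}$, the permutation $a\!\leftrightarrow\! c$, $b\!\leftrightarrow\! d$ does it; when they share one object, $p=\{a,b\}$ and $q=\{a,c\}$, the transposition $b\!\leftrightarrow\! c$ works. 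Since these permutations preserve the size profile, exchangeability of the conditional distribution forces the two probabilities to be equal, so every term is zero, giving $\mathbb{E}[s_+ - s_- \mid \text{profile}] = 0$. Summing over profiles yields $\mathbb{E}[\gamma] = 0$, and the corollary follows.

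I expect the main obstacle to be the ratio, not the symmetry: the exchangeability-based cancellation is clean once the comparison is reorganized pair-of-pairs by pair-of-pairs, but justifying that one may treat the denominator as a constant requires the conditioning-on-profile step and a careful check that conditioning preserves exchangeability --- which it does, since any profile class is closed under object permutations and the uniform measure restricted to it is permutation-invariant. A secondary point to handle carefully is the overlap case analysis above, verifying that the two structural cases (disjoint pairs, one shared object) are exhaustive and that each exhibited permutation genuinely maps one event onto the other.
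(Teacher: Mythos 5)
Your proposal is correct, and its skeleton matches the paper's: the corollary is obtained from Theorem~\ref{theorem:aucgamma} together with $E\{\gamma\}=0$, and the random denominator is neutralized by fixing the cluster-size profile --- your conditioning plus law-of-total-expectation step is exactly the paper's ``extended null model'' decomposition of ${\mathbb C}_k$ into subsets ${\mathbb S}_i$ of constant $s_{total}$. Where you genuinely diverge is in how the cancellation $E\{s_+ - s_-\}=0$ is obtained. The paper rewrites $s_+$ and $s_-$ as sums over triples and quadruples of distinct objects, invokes exchangeability only to assert that $E\{\mu_{\mathcal{C}_k}(i,j,s)\}$ and $E\{\phi_{\mathcal{C}_k}(i,j,r,s)\}$ are constants independent of the indexes, factors those constants out, and then cancels the two remaining \emph{deterministic} distance-comparison sums against each other via the re-indexings $(i,j,s)\mapsto(i,s,j)$ and $(i,j,r,s)\mapsto(r,s,i,j)$; the randomness plays no role in that final step. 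You instead group the comparisons by unordered pairs of object-pairs $\{p,q\}$ oriented by dissimilarity and kill each term individually by exhibiting a profile-preserving object permutation exchanging the events ``$p$ within, $q$ between'' and ``$p$ between, $q$ within'', so the cancellation lives entirely in the probability space of partitions. Your two overlap cases (disjoint pairs, one shared object) correspond precisely to the paper's quadruples and triples, and the case analysis is exhaustive since distinct pairs cannot share both objects. Your version buys a fully explicit justification of the exchangeability step --- the permutations you exhibit are exactly what a rigorous proof of the paper's ``index-independence'' claim would require --- while the paper's version buys a final cancellation between purely deterministic quantities, needing no measure-preservation argument at that stage.
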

\begin{proof}
For the sake of simplicity and without loss of generality, we initially assume here that there are no ties in the dissimilarities between pairs of objects (the more general case involving ties will be discussed in Section \ref{ties}). Since there are no ties, the quantity \mbox{$s_{total} = s_+ + s_-$} depends exclusively on the dataset size as well as on the number and the (im)balance of clusters in the partition under evaluation. For a given dataset of size $n$ and a fixed number of clusters $k$ of interest, $s_{total}$ depends solely on the relative cluster sizes. Once again, for the sake of simplicity and without loss of generality, we will \emph{initially} assume that the relative cluster sizes are also fixed in the null model. This may not be unreasonable in practice since one may want to compare a given candidate clustering solution against a null model of random solutions of exactly the same nature. In spite of that, we will subsequently show that the expected value actually doesn't change if we generalize/extend the null model such that the expectation is computed across random partitions with all possible cluster size proportions. In addition, since the result holds irrespective of the number of clusters, then it can be trivially shown to hold for an even more general null model in which the expected value for chance is computed over all possible partitions of the data, regardless of $k$ and/or (im)balance. 

For given $n$, $k$, and cluster size proportions, $s_{total}$ is a constant and, hence, the expected value of $\gamma$ can be derived from Equation~(\ref{eq:gamma}) as: 

\begin{equation} \label{eq:gamma_expected1}
E_{{\mathbb C}_k}\{\gamma\} = \frac{E_{{\mathbb C}_k}\{s_+\} -E_{{\mathbb C}_k}\{s_-\}}{s_{total}},
\end{equation}

\noindent where the expectation is taken evenly across the set ${{\mathbb C}_k}$ of all possible valid partitions \mbox{$\mathcal{C}_k = \{C_1,\dots,C_k\}$} consisting of $k$ clusters of fixed relative sizes $|C_1|/n, |C_2|/n, \cdots, |C_k|/n$ ($|\cdot|$ stands for set cardinality) or any permutation of these. In order to compute $E_{{\mathbb C}_k}\{s_+\}$ (and, subsequently, $E_{{\mathbb C}_k}\{s_-\}$
in an analogous fashion) it is worth noticing that term $s_+$ can be written in a completely equivalent form as:

\begin{equation}~\label{alternative_s+}
\begin{array}{lcl}
s_+ &  =  & \displaystyle  \sum_{\mathbf{x}_i \in \mathbf{X}} \; \; \sum_{\substack{\mathbf{x}_j \in \mathbf{X} \\ j \neq i}} \; \; \sum_{\substack{\mathbf{x}_s \in \mathbf{X} \\ s \neq j \neq i}} \Biggl [ \delta ( ||\mathbf{x}_i - \mathbf{x}_j|| < ||\mathbf{x}_i - \mathbf{x}_s ||) \cdot \mu_{\mathcal{C}_k}(i,j,s)\Biggr ] + \\
& & \displaystyle + \sum_{\mathbf{x}_i \in \mathbf{X}} \; \; \; \sum_{\substack{\mathbf{x}_j \in \mathbf{X} \\ j \neq i}} \; \; \; \sum_{\substack{\mathbf{x}_r \in \mathbf{X} \\ r \neq j \neq i}} \; \; \sum_{\substack{\mathbf{x}_s \in \mathbf{X} \\ s \neq r \neq j \neq i}} \Biggl [ \delta ( ||\mathbf{x}_i - \mathbf{x}_j|| < ||\mathbf{x}_r - \mathbf{x}_s ||) \cdot \phi_{\mathcal{C}_k}(i,j,r,s)\Biggr ],
\end{array}
\end{equation}

\noindent where $\mu_{\mathcal{C}_k}(i,j,s)$ is an indicator function that takes as argument the indexes $i \neq j \neq s$ of three different objects of the dataset (a triple with no duplicates) and returns 1 if and only if the first two objects belong to the same cluster ($\mathbf{x}_i,\mathbf{x}_j \in C_l$) whereas the third object belongs to a different cluster ($\mathbf{x}_s \in C_m$, $m \neq l$) in partition $\mathcal{C}_k$ ($\{C_l, C_m\} \subset \mathcal{C}_k$); otherwise, $\mu_{\mathcal{C}_k}(i,j,s)$ is equal to zero. Similarly, $\phi_{\mathcal{C}_k}(i,j,r,s)$ is an indicator function that takes as argument the indexes $i \neq j \neq r \neq s$ of four different objects of the dataset (a quadruple with no duplicates) and returns 1 if and only if the first two objects belong to the same cluster ($\mathbf{x}_i,\mathbf{x}_j \in C_l$) whereas the other two objects belong to separate clusters ($\mathbf{x}_r \in C_m, \mathbf{x}_s \not\in C_m$);\footnote{Note that $C_m$ is not necessarily different from $C_l$, they may or may not be the same cluster in partition $\mathcal{C}_k$.} otherwise, $\phi_{\mathcal{C}_k}(i,j,r,s)$ is equal to zero.

The main advantage of the above representation is that, unlike the previous equivalent definition of $s_+$ in Equation~(\ref{s+}), the summation indexes in Equation~(\ref{alternative_s+}) do \emph{not} depend on the clustering solution $\mathcal{C}_k$. Obviously, the summations are now covering an augmented set of terms, namely, terms involving the comparison of pairwise distances from all triples or quadruples of distinct objects in the dataset. The additional/augmented terms (and only those) are, however, cancelled out by a null value of the respective indicator function, namely, $\mu_{\mathcal{C}_k}(\cdot)$ for triples and $\phi_{\mathcal{C}_k}(\cdot)$ for quadruples.

It is worth noticing that functions $\mu_{\mathcal{C}_k}(\cdot)$ and $\phi_{\mathcal{C}_k}(\cdot)$ depend only on the partition $\mathcal{C}_k$ under evaluation, they do not depend on the pairwise dissimilarities between data objects. Conversely, function $\delta(\cdot)$ depends \emph{only} on the pairwise dissimilarities, it does \emph{not} depend on any partition of the data. From this observation, we can write the expectation $E_{{\mathbb C}_k}\{s_+\}$ from Equation~(\ref{alternative_s+}) as:

\begin{equation}~\label{expect_s+}
\begin{array}{lcl}
E_{{\mathbb C}_k}\{s_+\} &  =  & \displaystyle  \sum_{\mathbf{x}_i \in \mathbf{X}} \; \; \sum_{\substack{\mathbf{x}_j \in \mathbf{X} \\ j \neq i}} \; \; \sum_{\substack{\mathbf{x}_s \in \mathbf{X} \\ s \neq j \neq i}} \Biggl [ \delta ( ||\mathbf{x}_i - \mathbf{x}_j|| < ||\mathbf{x}_i - \mathbf{x}_s ||) \cdot E_{{\mathbb C}_k}\{\mu_{\mathcal{C}_k}(i,j,s)\}\Biggr ] + \\
& & \displaystyle + \sum_{\mathbf{x}_i \in \mathbf{X}} \; \; \; \sum_{\substack{\mathbf{x}_j \in \mathbf{X} \\ j \neq i}} \; \; \; \sum_{\substack{\mathbf{x}_r \in \mathbf{X} \\ r \neq j \neq i}} \; \; \sum_{\substack{\mathbf{x}_s \in \mathbf{X} \\ s \neq r \neq j \neq i}} \Biggl [ \delta ( ||\mathbf{x}_i - \mathbf{x}_j|| < ||\mathbf{x}_r - \mathbf{x}_s ||) \cdot E_{{\mathbb C}_k}\{\phi_{\mathcal{C}_k}(i,j,r,s)\}\Biggr ].
\end{array}
\end{equation}

Notice that terms $E_{{\mathbb C}_k}\{\mu_{\mathcal{C}_k}(i,j,s)\}$ and $E_{{\mathbb C}_k}\{\phi_{\mathcal{C}_k}(i,j,r,s)\}$ can be readily interpreted as the fraction of all partitions $\mathcal{C}_k \in {{\mathbb C}_k}$ (i.e., the fraction of the population of valid partitions comprised by the null model, \mbox{${{\mathbb C}_k}$}) such that the corresponding indicator functions return a non-zero (unit) value. Since the indicator functions $\mu_{\mathcal{C}_k}(i,j,s)$ and $\phi_{\mathcal{C}_k}(i,j,r,s)$ do \emph{not} depend on any intrinsic property of data objects, they depend instead only on the \emph{cluster labels} imposed to those specific objects indexed by the functions' arguments, the \emph{expected values} $E_{{\mathbb C}_k}\{\mu_{\mathcal{C}_k}(i,j,s)\}$ and $E_{{\mathbb C}_k}\{\phi_{\mathcal{C}_k}(i,j,r,s)\}$ will be the same irrespective of the indexes $i,j,r,s$. In other words, if we fix any three (four) distinct objects and average $\mu_{\mathcal{C}_k}(i,j,s)$ ($\phi_{\mathcal{C}_k}(i,j,r,s)$) over all partitions in ${\mathbb C}_k$, across which only the cluster labels of objects are permuted, then the result will be the same (a constant). We will call these constants $E_{{\mathbb C}_k}\{\mu_{\mathcal{C}_k}\}$ and $E_{{\mathbb C}_k}\{\phi_{\mathcal{C}_k}\}$ for short, whereby we can rewrite Equation~(\ref{expect_s+}) as:  

\begin{equation}~\label{expect_s+2}
\begin{array}{lcl}
E_{{\mathbb C}_k}\{s_+\} &  =  & \displaystyle E_{{\mathbb C}_k}\{\mu_{\mathcal{C}_k}\} \cdot  \sum_{\mathbf{x}_i \in \mathbf{X}} \; \; \sum_{\substack{\mathbf{x}_j \in \mathbf{X} \\ j \neq i}} \; \; \sum_{\substack{\mathbf{x}_s \in \mathbf{X} \\ s \neq j \neq i}} \delta ( ||\mathbf{x}_i - \mathbf{x}_j|| < ||\mathbf{x}_i - \mathbf{x}_s ||) \; \; + \\
& & \displaystyle E_{{\mathbb C}_k}\{\phi_{\mathcal{C}_k}\} \cdot  \sum_{\mathbf{x}_i \in \mathbf{X}} \; \; \; \sum_{\substack{\mathbf{x}_j \in \mathbf{X} \\ j \neq i}} \; \; \; \sum_{\substack{\mathbf{x}_r \in \mathbf{X} \\ r \neq j \neq i}} \; \; \sum_{\substack{\mathbf{x}_s \in \mathbf{X} \\ s \neq r \neq j \neq i}} \delta ( ||\mathbf{x}_i - \mathbf{x}_j|| < ||\mathbf{x}_r - \mathbf{x}_s ||).
\end{array}
\end{equation}

Following an analogous reasoning, we can also write $E_{{\mathbb C}_k}\{s_-\}$ as:

\begin{equation}~\label{expect_s-}
\begin{array}{lcl}
E_{{\mathbb C}_k}\{s_-\} &  =  & \displaystyle E_{{\mathbb C}_k}\{\mu_{\mathcal{C}_k}\} \cdot  \sum_{\mathbf{x}_i \in \mathbf{X}} \; \; \sum_{\substack{\mathbf{x}_j \in \mathbf{X} \\ j \neq i}} \; \; \sum_{\substack{\mathbf{x}_s \in \mathbf{X} \\ s \neq j \neq i}} \delta ( ||\mathbf{x}_i - \mathbf{x}_j|| > ||\mathbf{x}_i - \mathbf{x}_s ||)  \; \; + \\
& & \displaystyle E_{{\mathbb C}_k}\{\phi_{\mathcal{C}_k}\} \cdot  \sum_{\mathbf{x}_i \in \mathbf{X}} \; \; \; \sum_{\substack{\mathbf{x}_j \in \mathbf{X} \\ j \neq i}} \; \; \; \sum_{\substack{\mathbf{x}_r \in \mathbf{X} \\ r \neq j \neq i}} \; \; \sum_{\substack{\mathbf{x}_s \in \mathbf{X} \\ s \neq r \neq j \neq i}} \delta ( ||\mathbf{x}_i - \mathbf{x}_j|| > ||\mathbf{x}_r - \mathbf{x}_s ||).
\end{array}
\end{equation}

Now, notice that, for every triple $(i,j,s)$ (i.e., $ \forall \; i \neq j \neq s$) such that $\delta ( ||\mathbf{x}_i - \mathbf{x}_j|| < ||\mathbf{x}_i - \mathbf{x}_s ||) = 1$ in Equation~(\ref{expect_s+2}), there is a triple $(i,s,j)$ for which $\delta ( ||\mathbf{x}_i - \mathbf{x}_s|| > ||\mathbf{x}_i - \mathbf{x}_j ||) = 1$ in Equation~(\ref{expect_s-}), and vice versa. Similarly, for every quadruple $(i,j,r,s)$ (i.e., $ \forall \; i \neq j \neq r \neq s$) such that \mbox{$\delta ( ||\mathbf{x}_i - \mathbf{x}_j|| < ||\mathbf{x}_r - \mathbf{x}_s ||) = 1$} in Equation~(\ref{expect_s+2}), there is a quadruple $(r,s,i,j)$ for which $\delta ( ||\mathbf{x}_r - \mathbf{x}_s|| > ||\mathbf{x}_i - \mathbf{x}_j ||) = 1$ in Equation~(\ref{expect_s-}), and vice versa.

Therefore, $E_{{\mathbb C}_k}\{s_+\} = E_{{\mathbb C}_k}\{s_-\}$ and, from Equation~(\ref{eq:gamma_expected1}), it follows that $E_{{\mathbb C}_k}\{\gamma\} = 0$, i.e., the expected value of the Gamma Index under the assumed null model of random clustering solutions is zero. Finally, from this result and using Theorem~\ref{theorem:aucgamma}, $E_{{\mathbb C}_k}\{AUCC\} = 0.5$ follows straightforwardly. 

{\bf Extended Null Model:} The above results prove both the theorem and the corollary. However, the proof assumes that the relative sizes in the $k$ clusters contained in any random solution of the null model are fixed. If this condition is not satisfied, $s_{total} = s_+ + s_-$ will no longer be a constant (it will vary across different partitions in the null model), case in which Equation~(\ref{eq:gamma_expected1}) does not hold true, at least not simultaneously across the entire population of random partitions ${\mathbb C}_k$.

We can extend the above results to cases in which a more general null model is considered, where ${\mathbb C}_k$ contains partitions with any and all possible cluster size proportions, rather than a prefixed one. This can be achieved by noticing that ${\mathbb C}_k$ is a finite set, and all elements in this set (random clustering solutions) are equally likely, case in which the mathematical expectation of a function ($\gamma$) of the elements in this set can be written as the ordinary arithmetic mean of the function evaluations for each element in the set: $ E_{{\mathbb C}_k}\{\gamma\} = \frac{1}{|{\mathbb C}_k|} \sum_{{\mathcal C}_k \in {\mathbb C}_k} \gamma({\mathcal C}_k)$. If we arbitrarily group the set ${\mathbb C}_k$ into any chosen collection of disjoint subsets (of random clustering solutions) ${\mathbb S}_i \subset {\mathbb C}_k$, such that $\bigcup_{i} {\mathbb S}_i = {\mathbb C}_k$, it is trivial to show that the expectation across the entire set can be written as an average of the expectations within each subset ($E_{{\mathbb S}_i}(\gamma)$) weighted by their cardinalities, i.e.: 

\begin{align}
E_{{\mathbb C}_k}\{\gamma\} & = \frac{ \sum_{{\mathbb S}_i \in {\mathbb C}_k} |{\mathbb S}_i| \cdot E_{{\mathbb S}_i}(\gamma)}{\sum_{{\mathbb S}_i \in {\mathbb C}_k} |{\mathbb S}_i|} \nonumber \\ 
& = \frac{ \sum_{{\mathbb S}_i \in {\mathbb C}_k} |{\mathbb S}_i| \cdot E_{{\mathbb S}_i}(\gamma)}{|{\mathbb C}_k|}.
\label{weighted_exp}
\end{align}

Since this result is valid for any arbitrary subdivision of ${\mathbb C}_k$ as described above, for mathematical convenience we chose a subdivision such that every subgroup ${\mathbb S}_i$ contains all and only the clustering solutions in ${\mathbb C}_k$ that share the same cluster size proportions. In other words, each ${\mathbb S}_i$ is associated with a unique (im)balance of clusters, thence $s_{total}$ is constant for all partitions within ${\mathbb S}_i$ and all the results above in this proof are also valid for ${\mathbb S}_i$. Specifically, $E_{{\mathbb S}_i}\{\gamma\} = 0, \; \forall i$. Therefore, it follows from Equation~(\ref{weighted_exp}) that $E_{{\mathbb C}_k}\{\gamma\} = 0$ and, by evoking Theorem~\ref{theorem:aucgamma} we have $E_{{\mathbb C}_k}\{AUCC\} = 0.5$. %\qed
\end{proof}

% In the Appendix we provide an alternative proof that the expected value of AUCC under a null model of random clustering solutions is 0.5, which relies only on the definition of AUCC itself, rather than on its relation with $\gamma$.

\subsection{Ties in (Dis)similarity Values}\label{ties}

In principle, the relation between the Gamma Index and AUCC established in Theorem~\ref{theorem:aucgamma} assumes that there are no ties in the real-valued thresholds used to compute the area under the ROC curve. To better understand what happens when ties are present, let's consider the toy example in Table~\ref{tab:ties_example}. In a classification assessment scenario, the six instances would be data objects with binary class labels and a classification score associated with the positive class, whereas in a clustering assessment scenario they would correspond to pairs of objects with binary clustering assignment relations (``labels'') and a pairwise similarity value (``score'').

\begin{table}[!h]
    \centering
    \caption{Illustrative example of a classification or clustering problem involving ties.}
    \begin{tabular}{ccc}
    \toprule
    Instance & Label & Score\\
    \midrule
1 & 1 & 0.75 \\  
2 & 0 & 0.50 \\ 
3 & 1 & 0.50 \\ 
4 & 1 & 0.50 \\
5 & 0 & 0.25 \\ 
6 & 0 & 0.20 \\ 
\bottomrule
\end{tabular}
\label{tab:ties_example}
\end{table}

Notice that instances 2, 3 and 4 share exactly the same score of 0.5, i.e., they are tied. As the real-valued threshold used to compute the AUC moves from 0.25 to 0.75 (or vice-versa), it is not clear how exactly the ROC curve should move from point $(FPR,TPR) = \left(\frac{1}{3},1\right)$ to point $(FPR,TPR) = \left(0,\frac{1}{3}\right)$ in the ROC graph, respectively. This is because the ROC curve depends on the relative rank/order of the instances according to their scores, however, the relative order among instances sharing the same score cannot be uniquely determined. Each rank permutation of those instances would incur a different area under the curve. Notice that the uncertainty around the final, total area comes exclusively from the subarea of the unit square comprising the rectangle with diagonal/opposite vertices $\left(\frac{1}{3},1\right)$ and $\left(0,\frac{1}{3}\right)$. The area of this rectangle is $\frac{1}{3} \times \frac{2}{3} = \frac{2}{9}$.

The most accepted and widely adopted approach to compute the ROC curve in case of ties is so-called ``walk along the diagonal'', which in our pedagogic example in Table~\ref{tab:ties_example} corresponds to connecting the points $\left(\frac{1}{3},1\right)$ and $\left(0,\frac{1}{3}\right)$ with a straight line \citep{Fawcett06}. In terms of the area under the curve, this corresponds to assigning to the final area precisely half of the subarea subject to uncertainty due to ties \big(i.e., $\frac{1}{2} \times \frac{2}{9} = \frac{1}{9}$\big); in the above example, the final area will thus amount to a total of $0.8888$. From the probabilistic interpretation of ROC curves previously discussed in Sections~\ref{intro} and \ref{rocgammaequi}, this approach corresponds to assigning half of the fraction of probability involved in ties (and whose assignment is unclear) to the computed area under the curve as an estimate of the probability that positive (1) instances will be ranked higher than negative (0) instances. Accordingly, the other half will be assigned to this probability's complement, i.e., the estimated probability that negative instances will be ranked higher.

In the clustering assessment scenario, the ``walk along the diagonal'' approach described above corresponds to assigning half of the fraction of probability involved in ties to the computed area under the curve as an estimate of the probability that pairs of objects belonging to the same cluster (1) will be ranked higher than pairs belonging to different clusters (0). Let's call the fraction of probability (i.e. the subarea) involved in ties as $P_t$, such that $P_t = \frac{2}{9}$ in our example. When computing the Gamma Index, this is the probability associated with the outcomes that are \emph{not} accounted for by terms $s_+$ and $s_-$, namely, those outcomes involving similarity ties. In the presence of ties, the quantity $s_{total} = s_+ \, + \, s_-$ is \emph{no longer} equal to a constant that represents the total number of possible counts and depends exclusively on $n$, $k$ and relative cluster sizes. Rather, such a constant (renamed \mbox{$s'_{total}$} hereafter) is now equal to \mbox{$s'_{total} = s_+ + s_- + s_0$}, where $s_0$ is given by:

\begin{align*}
s_0 &= \frac{1}{2}\sum_{l=1}^k \sum_{\substack{\mathbf{x}_i,\mathbf{x}_j \in C_l \\ \mathbf{x}_i \neq \mathbf{x}_j}} \frac{1}{2}\sum_{m=1}^k \sum_{\substack{\mathbf{x}_r \in C_m \\ \mathbf{x}_s \not\in C_m}} \delta ( ||\mathbf{x}_i - \mathbf{x}_j|| = ||\mathbf{x}_r - \mathbf{x}_s ||), \label{s0}
\end{align*}

\noindent such that $P_t = s_0 / s'_{total}$. When there are ties, $s_0 \neq 0$, $P_t \neq 0$ and, because $P(s_+) + P(s_-) + P_t = 1$, it follows that $P(s_+) + P(s_-) \neq 1$. In this case, the relation between the Gamma Index and AUCC established in Theorem~\ref{theorem:aucgamma} is no longer valid. As a simple proof by counter-example, in the dataset of Table~\ref{tab:ties_example} it is trivial to show that $s_+ = 7$ and $s_- = 0$, so if Equation~(\ref{eq:gamma}) were to be used, the result would be $\gamma = \frac{7 - 0}{7 + 0} = 1$; evoking Theorem~\ref{theorem:aucgamma} one would in that case get $\text{AUCC} = (1 + \gamma)/2 = 1$, which is in contradiction with the value obtained by computing the area under the ROC curve walking along the diagonal to resolve ties ($\text{AUCC} = 0.8888$). The contradiction is caused by the presence of ties, which violates Theorem~\ref{theorem:aucgamma}'s assumption.

Under the ``walk along the diagonal'' assumption for dealing with ties in the AUCC computation, the relation in Theorem~\ref{theorem:aucgamma} can be reestablished by also distributing half of the probability $P_t$ to $P(s_+)$ and the other half to $P(s_-)$ when computing Gamma. In other words, we must assign an additional $s_0/2$ amount to term $s_+$ and the same $s_0/2$ additional amount to term $s_-$ when computing $\gamma$ in Equation~(\ref{eq:gamma}), which is thereby generalized as:

\begin{equation} \label{eq:gamma_gen}
\gamma = \frac{\left(s_+ + \frac{s_0}{2}\right) - \left(s_- + \frac{s_0}{2}\right)}{\left(s_+ + \frac{s_0}{2}\right) + \left(s_- + \frac{s_0}{2}\right)} = \frac{s_+ - s_-}{s_+ + s_- + s_0},
\end{equation}

 \noindent and clearly reduces back to Equation~(\ref{eq:gamma}) in the absence of ties. By using Equation~(\ref{eq:gamma_gen}) instead of Equation~(\ref{eq:gamma}) in the dataset of Table~\ref{tab:ties_example}, one has $s_0 = 2$, \mbox{$\gamma = \frac{7 - 0}{7 + 0 + 2} = \frac{7}{9}$} and, in this case, $\text{AUCC} = (1 + \gamma)/2 = 0.8888$ follows from Theorem~\ref{theorem:aucgamma} as expected. 

Equation~(\ref{eq:gamma_gen}) allows Theorem~\ref{theorem:aucgamma} to be stated more broadly without any particular assumption involving ties. It is worth noticing that, by distributing the probability/area associated with ties evenly between $s_+$ and $s_-$ as in Equation~(\ref{eq:gamma_gen}) --- or equivalently, ``walking along the diagonal'' when computing AUCC --- we do not change the fact that $E_{{\mathbb C}_k}\{s_+\} = E_{{\mathbb C}_k}\{s_-\}$ and, as a consequence, the expected value properties in Theorem~\ref{theorem:Gammazero} and Corollary~\ref{corollary:aucchalf} remain valid for $\gamma$ and $\text{AUCC}$, respectively.

Finally, it is also worth noticing that at least a couple of (not-so-common) alternative approaches to resolve ties in ROC analysis exist \citep{Fawcett06} that could also be adopted to compute AUCC. In particular, the \emph{optimistic} approach, which fully assigns the total amount of probability/subarea associated with ties to the final area under the curve --- would be equivalent to allocating the corresponding additional amount $s_0$ entirely to $s_+$ (none to $s_-$) when computing the Gamma Index. In contrast, the \emph{pessimistic} approach --- where none of the probability/subarea associated with ties is assigned to the area under the curve --- would be equivalent to allocating the additional amount $s_0$ entirely to $s_-$ when computing Gamma. While these alternative approaches for AUCC computation and the corresponding aforementioned modifications to the Gamma Index would keep the relation in Theorem~\ref{theorem:aucgamma} valid in spite of the presence/absence of ties, these approaches would clearly bias the expected value of either $s_+$ or $s_-$ such that $E_{{\mathbb C}_k}\{s_+\} \neq E_{{\mathbb C}_k}\{s_-\}$. Accordingly, Theorem~\ref{theorem:Gammazero} and Corollary~\ref{corollary:aucchalf} would no longer be valid in the presence of ties. 

\section{Experimental Evaluation}\label{eval}

\subsection{Agreement with External Evaluation}\label{uci}

In order to experimentally assess the use of AUCC in the relative clustering validation scenario, we have employed the same evaluation methodology proposed by~\citet{VenCamHru09,VenCamHru10}. In short, it assumes that the best relative validity criteria should have the highest correlation with an external validity index. External indices are based on comparisons between the clustering results and a ground-truth, which are available for simulated and benchmark datasets. Evaluations of relative criteria based on their correlations with an external index have also been carried out e.g. by~\citep{JasMouFurCam15, Nguyen2020}. The procedure can be summarized as follows:

\begin{enumerate}
\item Given a dataset, generate partitions/solutions with different numbers of clusters ($k$), usually with $2 \le k \le \sqrt{n}$ (configuration we adopt here), employing one or more clustering algorithms;

\item Determine the quality of the partitions w.r.t. to the relative validity criteria under scrutiny;

\item Determine the quality of each partition according to one (or more) external validity criterion;

\item Measure the correlation between the unsupervised and supervised evaluations provided by each of the relative and the external validity criteria, respectively.
\end{enumerate}

To generate a diverse collection of clustering partitions (Step 1 of the evaluation methodology), we employ the well-known k-means clustering algorithm \citep{Mac67} and four variants of Hierarchical Clustering Algorithms (HCAs)~\citep{JaiDub88}, namely,
Single-Linkage, Average-Linkage, Complete-Linkage, and Ward's. For each dataset we generate partitions in the range $k \in \{2, \dots,
k_{max}\}$, with $k_{max} = \left\lceil \sqrt{n} \right\rceil$. In the case of k-means, for each $k$, 100 initializations are undertaken and the partition with best MSE (Mean Squared Error) is then selected for further evaluation. External agreement of partitions with respect to the true labels are obtained with the Adjusted Rand Index (ARI)~\citep{HubAra85,AmiGonArtVer09}. Correlations between relative and external evaluations are given by the Pearson correlation coefficient~\citep{Pea1895}.

To place the results of AUCC into perspective we consider a collection of 28~relative validity criteria commonly employed in the literature as baselines \citep{Nguyen2020,Zhou2021}. These are: Calinski-Harabasz~(VRC) \citep{Calinski74}, Davies–Bouldin (DB)~\citep{DavBou79}, Dunn’s
Index \citep{Dun74}, 17 variants of Dunn’s Index \citep{BezPal98},
PBM \citep{PakBanMau04}, C-Index \citep{HubLev76}, Point-Biserial~\citep{Mil81}, C/Sqrt(k) \citep{RatLan78,Hil80}, Silhouette Width Criterion (SWC) \citep{Rou87}, Simplified Silhouette
Width Criterion (SSWC)~\citep{Hruschka2006}, Alternative Silhouette Width Criterion (ASWC)~\citep{Hruschka2004}, and
Alternative Simplified Silhouette Width Criterion (ASSWC)~\citep{VenCamHru09}. 

We evaluate AUCC alongside the aforementioned baseline validity criteria in 10 real datasets with varied characteristics
in terms of their numbers of objects, dimensions and clusters in the reference ground-truth partition. These are: (i) the Yeast Galactose (Yeast) and Cell Cycle from Yeung et al.~\citep{YeuFraMurRaf01} as well as eight datasets from UCI, as summarized in Table~\ref{tab:uci}. 

\begin{table}[!h]
\scriptsize
    \centering
    \caption{Real datasets employed in experiments.}
    \begin{tabular}{clccc}
    \toprule
    \#&Dataset&\# Objects&\# Dimensions&\# Clusters\\
    \midrule
1&Balance Scale& 625& 4& 3\\
2&Cell Cycle&237& 17& 4\\
3&Control Chart (KDD)& 600& 60 &6\\
4&E. Coli& 336& 7& 8\\
5&Iris &150& 4& 3\\
6&Karhunen&2000& 64 &10\\
7&Sonar& 208& 60& 2\\
8&Vehicle& 846 &18 &4\\
9&Wisconsin Breast Cancer &683& 9& 2\\
10&Yeast &205 &20 &4\\
\bottomrule
\end{tabular}
\label{tab:uci}
\end{table}

Evaluation results are depicted in Table~\ref{tab:uciresults}. Validity criteria are ranked by their average correlation values. Although we evaluated a total of 18 Dunn formulations, for the sake of simplicity, we show the results for the best performer. AUCC %obtained the highest correlation values in three of the datasets and
ranked second best overall, below Point-Biserial (PB) only. Aggregated results across multiple datasets should be taken with a grain of salt though. Different criteria address the multi-faceted problem of clustering evaluation from different angles and may emphasize more or less certain particular aspects. It is well-known that no single criterion should be expected to outperform all the others in all problems. Instead, different criteria are expected to perform better/worse than others in different problems or scenarios. In fact, notice in Table~\ref{tab:uciresults} that, while a subset of criteria have been outperformed by others within the collection of ten datasets involved in our experiments, five different criteria, namely PB, AUCC, C-Index, C/Sqrt(k) and VRC, exhibited the best/top performance in at least one dataset.
% (specifically in 5, 3, 1, 1 and 1 dataset(s), respectively).
For this reason, it is widely accepted that an analyst should not rely on a single criterion for unsupervised clustering evaluation \citep{BezPal98,JasMouFurCam15}; naive attempts to elect a single criterion as the best one overall are inevitably fruitless, unless they focus on specific classes of problems/scenarios.

\begin{table}[!h]
\centering
\caption{Evaluation results of AUCC and baseline relative criteria on 10 real datasets. 
%, according to the methodology from~\citep{VenCamHru09,VenCamHru10}.
Each cell displays the Pearson correlation between the relative evaluation and the external evaluation obtained with ARI. Top performance for each dataset (in columns ``1'' to ``10'') is highlighted in bold.}
\ssmall
\newcolumntype{C}{>{\centering\arraybackslash}m{0.6cm}<{}}
\newcolumntype{L}{>{\arraybackslash}m{1.2cm}<{}}
\newcolumntype{R}{>{\centering\arraybackslash}m{0.65cm}<{}}
\begin{tabular}{LCCCCCCCCCC|CCC}
\toprule
%&\rotatebox{55}{\makebox[1.5cm][l]{B. Scale}}&\rotatebox{55}{\makebox[1.5cm][l]{Cell Cycle}}&\rotatebox{55}{\makebox[1.5cm][l]{C. C. (KDD)}}&\rotatebox{55}{\makebox[1.5cm][l]{E. Coli}}&\rotatebox{55}{\makebox[1.5cm][l]{Iris}}&\rotatebox{55}{\makebox[1.5cm][l]{Karhunen}}&\rotatebox{55}{\makebox[1.5cm][l]{Sonar}}&\rotatebox{55}{\makebox[1.5cm][l]{Vehicle}}&\rotatebox{55}{\makebox[1.5cm][l]{Wisc. B. C.}}&\rotatebox{55}{\makebox[1.5cm][l]{Yeast}}&\rotatebox{55}{\makebox[1.5cm][l]{Best}}&\rotatebox{55}{\makebox[1.5cm][l]{Average}}&\rotatebox{55}{\makebox[1.5cm][l]{Worst}}\\
Dataset \#&1&2&3&4&5&6&7&8&9&10&Best&Avg.&Worst\\
\cmidrule(lr){1-11}\cmidrule(lr){12-14}
PB&0.79&\textBF{0.91}&0.61&\textBF{0.97}&\textBF{0.69}&\textBF{0.89}&0.31&0.40&\textBF{0.98}&0.57&0.98&0.71&0.31\\
AUCC&0.48&0.60&\textBF{0.75}&0.76&0.13&0.84&\textBF{0.70}&0.78&0.91&\textBF{0.77}&0.91&0.67&0.13\\
C-Index&0.53&0.75&\textBF{0.75} &0.83&-0.07&0.88&0.64&0.78&0.81&0.57&0.88&0.65&-0.07\\
C/Sqrt(k)&\textBF{0.88}&0.82&0.10&0.81&0.59&0.76&0.32&0.71&0.73&0.58&0.88&0.63&0.10\\
SWC&0.76&0.84&0.06&0.65&0.34&0.80&0.38&0.82&0.88&0.73&0.88&0.62&0.06\\
ASWC&0.70&0.50&0.19&0.58&0.37&0.65&0.17&0.78&0.84&0.70&0.84&0.55&0.17\\
VRC&0.82&0.72&-0.02&0.62&0.19&0.48&0.13&\textBF{0.85}&0.58&0.72&0.85&0.51&-0.02\\
SSWC&0.76&-0.22&0.00&0.68&0.53&0.81&0.37&0.57&0.82&0.68&0.82&0.50&-0.22\\
Dunn 31&0.73&0.60&-0.18&0.65&0.15&0.59&0.36&0.68&0.79&0.46&0.79&0.48&-0.18\\
ASSWC&0.05&-0.27&0.02&0.64&0.60&0.71&0.12&0.37&0.82&0.69&0.82&0.37&-0.27\\
PBM&0.49&-0.09&-0.10&0.27&0.56&-0.29&-0.43&0.67&0.43&0.50&0.67&0.20&-0.43\\
DB&0.57&0.58&-0.54&0.26&-0.67&0.44&0.50&-0.03&0.53&0.04&0.58&0.17&-0.67\\
\cmidrule(lr){1-11}\cmidrule(lr){12-14}
Best&0.88&0.91&0.75&0.97&0.69&0.89&0.70&0.85&0.98&0.77&0.98&0.71&0.31\\
Average&0.63&0.48&0.14&0.64&0.28&0.63&0.30&0.62&0.76&0.58&0.83&0.51&-0.09\\
Worst&0.05&-0.27&-0.54&0.26&-0.67&-0.29&-0.43&-0.03&0.43&0.04&0.58&0.17&-0.67\\
\bottomrule
\end{tabular}
\label{tab:uciresults}
\end{table}

Rather than seeking a single, general purpose favorite criterion, a more realistic approach to practical clustering evaluation is to focus on strengths of different criteria to keep a collection of reliable candidates in one's cluster analysis toolbox. From this standpoint, we argue that AUCC is a candidate to be included in this collection. In terms of reliability as assessed from the lens of robustness, it is noticeable from Table~\ref{tab:uciresults} that, in the majority of those cases in which AUCC does not provide the best evaluation, it still produces results close to the best criterion or, at least, far from the worst case.
% A noticeable poor result is obtained with the Iris dataset (dataset \#5), possibly due to the overlap of objects in distinct classes as ground-truth clusters. In general, AUCC provides solid evaluation results in various scenarios.
An important aspect that also relates to reliability (and, possibly to a significant strength or weakness) of criteria is their behavior when assessing random solutions without actual cluster structure. This aspect is discussed next.    

\subsection{Expected Value}\label{expected}

AUCC (like its linearly related equivalent, Gamma Index) has the advantage that it allows for a better interpretation in terms of its expected value for chance clusterings, as shown in Corollary~\ref{corollary:aucchalf} following from Theorem~\ref{theorem:Gammazero}. We are not aware of other criteria with a theoretical characterization of its value for chance. In order to experimentally assess how the different measures behave in this regard, we ran controlled experiments with 108 synthetic datasets from~\citet{VenCamHru09,VenCamHru10}, consisting of mixtures of multivariate Gaussians with varied characteristics. The 108 datasets have 500 objects each and are obtained from three design factors comprising number of dimensions (2, 3, 4, 22, 23, or 24), number of clusters for the reference partition (2, 4, 6, 12, 14, or 16) and cluster size distribution. Regarding distribution, there are three different settings:~(i)~balanced clusters; (ii) one cluster with 10\% of the objects and the remaining objects evenly distributed among other clusters and; (iii) one cluster with 20\% (if $k^* \in \{12, 14, 16\}$) or 60\% (if $k^* \in \{2, 4, 6\}$) of the objects, and the remaining objects, again, evenly distributed among the other clusters. The term $k^*$ accounts for the actual number of clusters in the \emph{reference partition} of the dataset.

For each one of the 108 datasets we generated \emph{random partitions} as \emph{candidate clustering solutions} considering the number of clusters ($k$) in the range of 2 to $\ceil[\big]{\sqrt{n}}$, where $n$ is the number of objects, therefore $\ceil[\big]{\sqrt{500}} = 23$. The random partitions were generated considering three balances for the random clusters: (i)~balanced clusters;~(ii) one cluster with 10\% of the objects and the remaining objects evenly distributed among other clusters and; (iii) one cluster with 60\% of the objects, and the remaining objects, again, evenly distributed among the other clusters. For each dataset, balance of the random partition, and number of clusters we generated a total of 100 random clustering solutions, which were then assessed by each of the relative validity criteria considered in this study. For the sake of compactness, among the reported top performing criteria from the previous evaluation involving real data, we only show results of a single version of the Silhouette criterion (namely, the original SWC) since other variants exhibited similar behaviour.

Figure~\ref{fig:rand_eval_auc} summarizes the results. Each row of the plot depicts the results of one criterion, whereas each column accounts for a different balance of the candidate random partitions assessed. In each plot an orange line represents the average of the 100 random partitions for a given dataset. Since we ran experiments on 108 datasets, there are 108 lines per plot, plus a red line that accounts for the mean across all experiments. It is worth mentioning that for each criterion (row) the y-axis is at the very same range/scale of the criterion's value across the multiple columns.

\begin{figure}[!ht]
\centering
\includegraphics[width=.98\columnwidth,keepaspectratio=true]{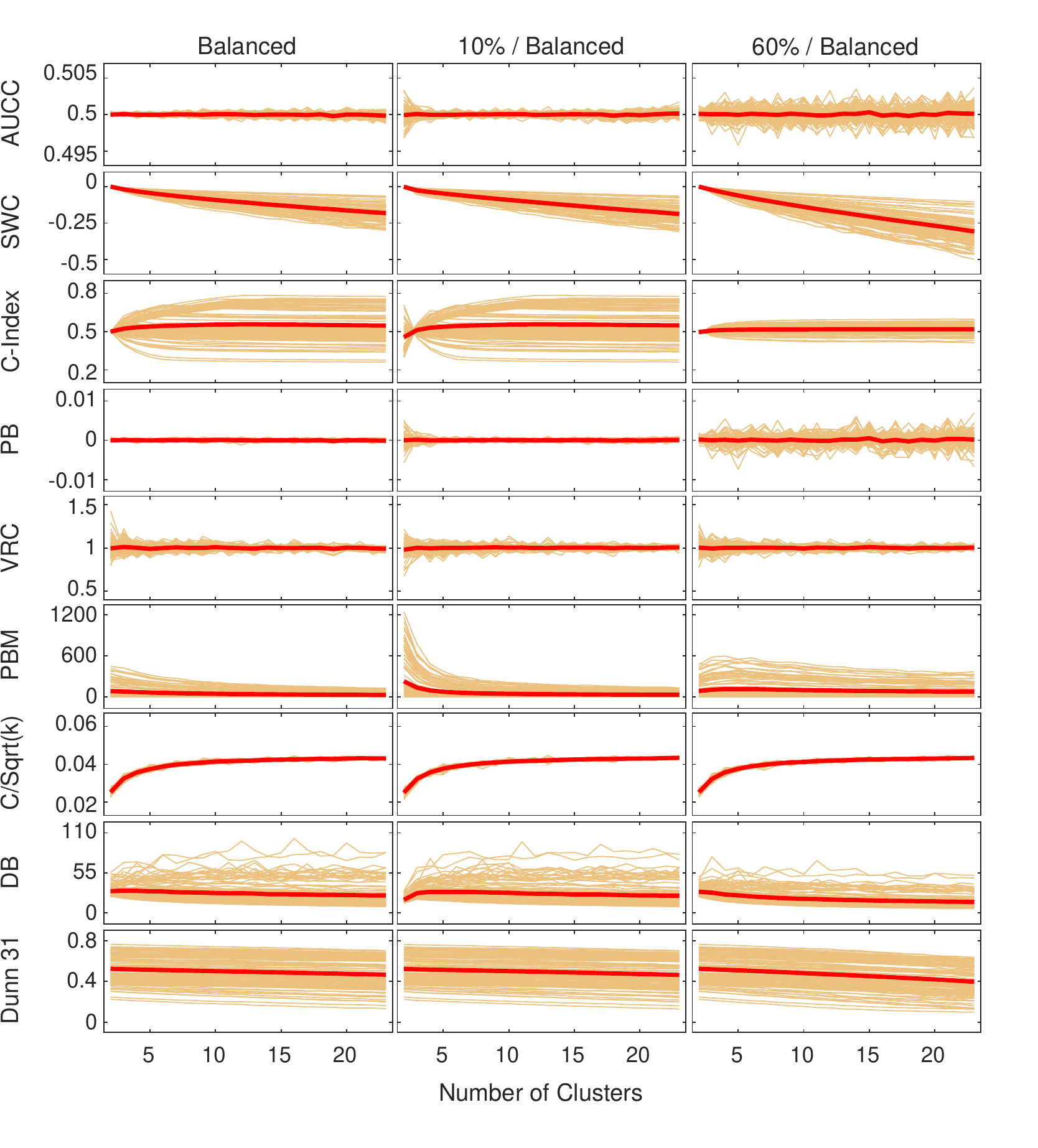}
\caption{Evaluation of random partitions of 108 synthetic datasets with varied characteristics. Each orange line corresponds to the average of 100 randomly generated partitions when evaluated in a given dataset (total of 108 lines/datasets per plot).
% Datasets have different dimensions, number of clusters, and cluster size distributions.
The number of clusters \emph{in the randomly generated partitions} is depicted in the x-axis. Results are stratified on the basis of the cluster size distribution \emph{of the generated partitions} (plot columns). The red line accounts for the overall mean.
}
\label{fig:rand_eval_auc}
\end{figure}

It can be seen that, as expected, AUCC exhibits values around 0.5 with very small variability regardless of: (i)~the number of dimensions in the dataset; (ii) the number of clusters (both in the dataset as well as in the randomly generated partitions); and (iii) the cluster size distribution (once again both in the dataset as well as in the randomly generated partitions). Besides AUCC, two other measures (PB and VRC) did not display noticeable changes in their empirical expected values for random solutions, although we are not aware of any formal proof to support this observation. Notably, other top performing measures such as Silhouette (SWC), C-Index, C/Sqrt(k) and PBM exhibited clear changes/patterns in empirical expected values as a function of the numbers of clusters and/or prominent variability across different datasets (orange lines). For SWC and C/Sqrt(k) the trend of the empirical expected value  as a function of the number of clusters (decreasing for the former and increasing for the latter) seems consistent across the different experimental settings and datasets. This is not the case for PBM and C-Index. For C-Index in particular, the empirical expected value can noticeably increase or decrease as a function of the number of clusters, depending on each particular dataset (orange line). It also varies with different size distributions in the randomly generated partitions (different columns of Figure~\ref{fig:rand_eval_auc}). 

In a practical scenario, the lack of a known constant expected value for a relative measure under a null model of random clustering solutions can impair evaluation, most noticeably when one wants to compare solutions across different numbers of clusters, because the evaluation result can be biased by the number of clusters irrespective of the quality of the assessed solutions.
% given that significance values cannot be easily derived for different numbers and configurations of clusters. 

\subsection{ROC Curves}

An important aspect of ROC curves is their visual interpretation, as the curves display the trade-off between sensitivity (TPR) and specificity (1-FPR) for distinct solutions. We analyzed ROC curves for the well-known Ruspini dataset (4 clusters) as well as for a simulated dataset\footnote{This dataset consists of 9 clusters, with 50 objects each, obtained from normal distributions with variance equal to 4.5, centered at $(0,0)$, $(0,20)$, $(0,40)$, $(20,0)$, $(20,20)$, $(20,40)$, $(40,0)$, $(40,20)$, and $(40,40)$.} for  candidate solutions with varying numbers of clusters (Figure~\ref{fig:clus_roc}).

\begin{figure}[!ht]
    \centering
    \includegraphics[width=\linewidth]{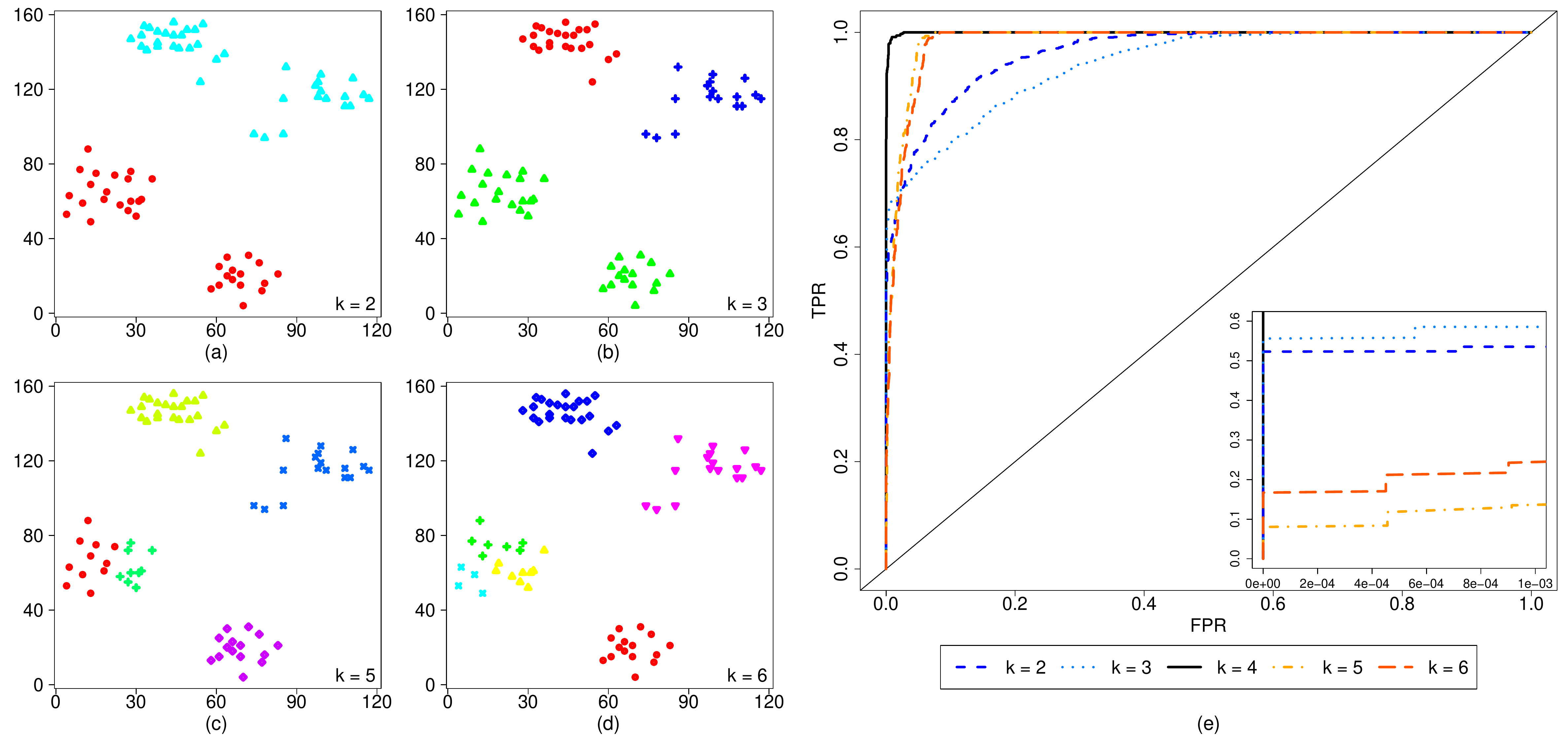}\vspace{0.2cm}
    \includegraphics[width=\linewidth]{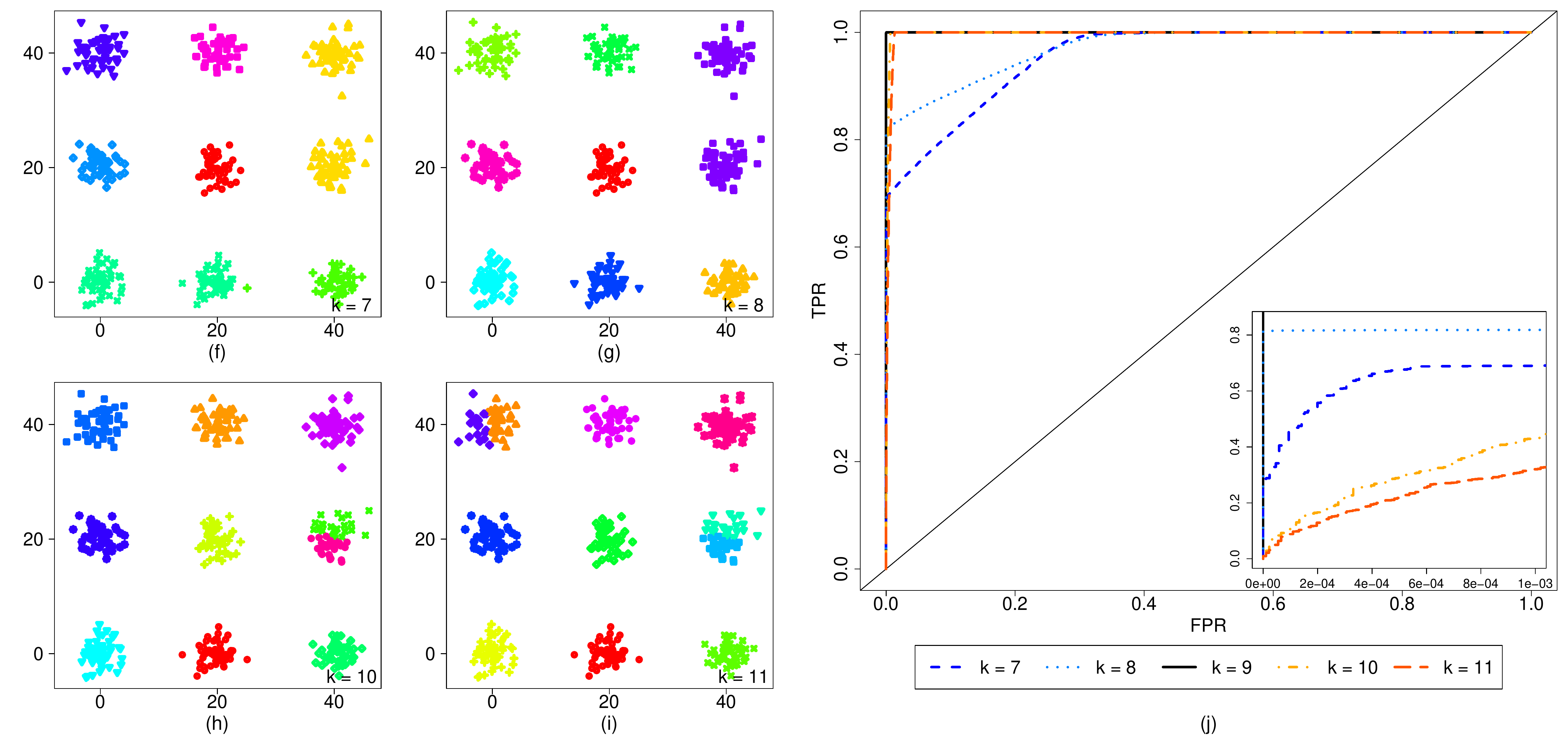}
    \caption{Clustering results and ROC Curves. From (a) to (d) we show clustering solutions with $k = 2 ,3, 5~\rm{and}~6$ for the Ruspini dataset, with (e) their corresponding ROC Curves. From (f) to~(i) we show clustering solutions with $k = 7, 8, 10~\rm{and}~11$ for a simulated dataset, with (j) their corresponding ROC Curves.~Partitions were generated with k-means. We zoom in regions at the bottom left corner to highlight the behaviour of FPR and TPR for solutions under- vs over-estimating the number of clusters for small values of the distance threshold. ROC Curves for the (omitted) partitions with the optimal number of clusters are depicted in black in both (e) and (j).}
    \label{fig:clus_roc}
\end{figure}

Solutions with the optimal number of clusters (whose partitions are not displayed in Figure~\ref{fig:clus_roc} for the sake of compactness) resulted in the best AUC values, namely, $0.9994532$ for Ruspini and $1.00$ for the simulated dataset. Solutions under-estimating the number of clusters produced lower AUC scores than solutions over-estimating the number of clusters. This is because AUCC is based on pairs of objects and a solution merging two equal sized natural clusters (under-estimation) will produce four times more errors than solutions splitting a natural cluster into two balanced halves (over-estimation). In addition, the main observed trends when comparing these solutions are as follows: starting from the bottom left of the ROC chart, as the distance (resp. similarity) threshold is raised (resp. lowered), under-estimated solutions initially tend to sustain lower FPR values while TPR increases, until a point beyond which there is mixed, alternated increments on TPR and FPR, such that very high, more steady values of TPR (approaching 1) are only reached for higher values of the distance threshold and FPR (see ROC curves in cold colors). This is because under-estimated solutions tend to merge natural clusters, so errors tend to occur from larger distances, i.e., at the lower region of the similarity rank, where there will be mixed negative (0s) and positive (1s) pairs 
%(the latter of which caused by the mergers) 
% XXX - IC - removi o comentario acima, ja que dificulta um pouco o etendimento da frase. 
whereas ideally there should only be negative ones. In contrast, over-estimated solutions observe earlier increments on FPR, which however don't prevent high, more steady values of TPR approaching 1 from also being reached earlier, i.e., for lower values of the distance threshold and FPR (see ROC curves in hot colors). This is because over-estimated solutions tend to split natural clusters into more compact sub-clusters, so errors tend to occur at smaller distances, i.e., at the upper region of the similarity rank, where there will be negative pairs (0s, introduced by the splits) mixed with (a reduced number of) positive pairs (1s), whereas ideally there should only be positive ones. This provides insights on how ROC analysis can be used to further investigate under-estimation and over-estimation in cluster solutions. 

The ROC Graph and AUCC can also provide valuable insights on the difficulty of the clustering problem itself. In Figure~\ref{fig:clus_variance} we provide an illustrative example of how different degrees of overlap between clusters can affect ROC Curves. As cluster variances increase, causing clusters to overlap, their corresponding AUCC values decrease considerably. From this perspective, scenarios in which only low to moderate AUCC evaluations are observed for a given ground-truth solution (possibly available for external clustering validation), which cannot be properly recovered despite various different clustering algorithms and parameters having been considered, may be an indicative of the intrinsic difficulties of the problem in hand. Specifically, this may be caused by the fact that the ground-truth cluster structure in question violates one of the common assumptions about clustering, namely, the assumption that clusters should be compact and separated~\citep{Eve74}. This assumption can be interpreted as within-cluster distances expected to be  smaller than between-cluster distances. It could also be the case that this is actually mostly true for some suitable underlying distance, which is not the one adopted (to assess the ground-truth or by the algorithms considered) though.

% The ROC Graph and AUCC can also provide valuable insights on the difficulty of the clustering problem itself. In Figure~\ref{fig:clus_variance} we provide an illustrative example of how different degrees of overlap between clusters can affect ROC Curves. As cluster variances increase, causing clusters to overlap, their corresponding AUCC values decrease considerably. Scenarios in which only low to moderate AUCC evaluations are observed, despite various different clustering algorithms and parameters having been considered, may be an indicative of the intrinsic difficulties of the problem in hand. Specifically, this may be caused by the fact that the cluster structure present in the data violates one of the usual assumptions about clustering, namely, the assumption that clusters should be compact and separated~\citep{Eve74}, which can be interpreted as within-cluster distances expected to be smaller than between-cluster distances. It could also be the case that this is mostly true for some suitable underlying distance, which is not the one adopted by the algorithms considered though.

\begin{figure}[!ht]
    \centering
    \includegraphics[width=\linewidth]{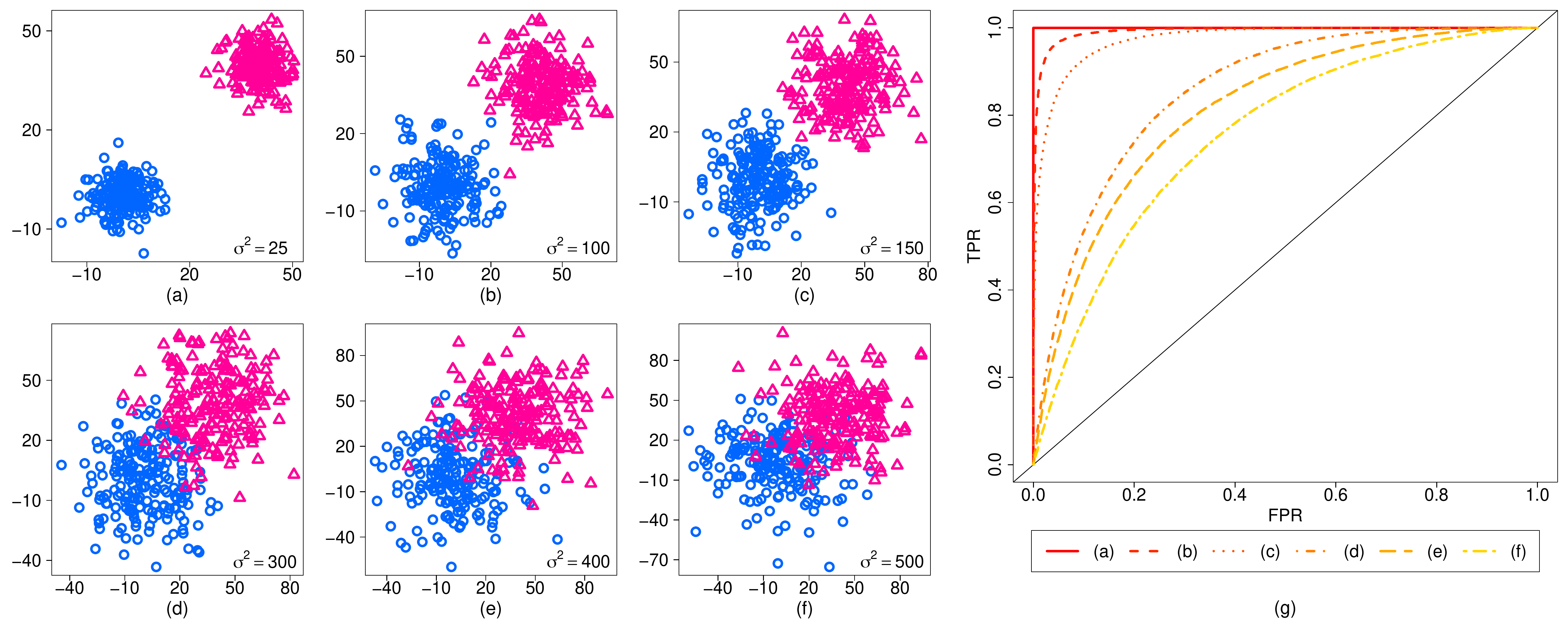}
    \caption{From (a) to (f) we depict six datasets with two clusters (200 objects per cluster) obtained from normal distributions centered at $(0,0)$ and $(40,40)$. For each dataset, cluster variances are the same for both clusters and each of their coordinates, fixed at 25, 100, 150, 300, 400, and 500, from (a) to (f). In (g) we depict the ROC Curves for the ground truth-partition of each dataset. Their respective AUC values are 1.00, 0.9923144, 0.9739683, 0.8554201, 0.8129828, and 0.7548744.}
% XXX - eu incluiria os valores da variancia nos plots a-e. Tambem trocaria as cores dos grupos (vermelho e amarelo) ja que estao parecidos com as curvas (e isso confude um pouco). 
    \label{fig:clus_variance}
\end{figure}

\section{Conclusions}\label{conc}

AUC has been extensively employed in the supervised learning domain as a valuable tool to evaluate and compare different classification models. In this work, we showed that it can also be employed in the unsupervised learning domain, more specifically, in the relative evaluation of clustering results.~In this particular setting we introduced the Area Under the Curve for Clustering, AUCC.~We theoretically showed that its expected value under a null model of equally likely random clustering solutions is 0.5, irrespective of the number or (im)balance of clusters. To our knowledge, no other relative measure has been theoretically shown to have this property.

We also showed that in the context of internal/relative clustering validation AUCC is a linear transformation of the Gamma Index from \citet{BakHub75}, for which we have also derived a theoretical expected value under a null model of random clustering partitions. In that context, we showed how ties in (dis)similarity values can be handled consistently across AUCC and Gamma so that their relationship and expected value properties are preserved. We discussed the computational complexity of these criteria and showed that AUCC represents a much more efficient algorithmic way to implement Gamma. We also showed that a visual inspection of the AUCC provides insights on clustering solutions producing under-estimated and over-estimated solutions, as well as on the difficulty of the clustering problem.
% XXX - IC - eu inclui algo acima. 

In addition to its theoretical, computational and visual appeals, AUCC exhibited very competitive results in our experimental evaluations using well-known classification benchmark datasets. These results need, however, to be taken with a grain of salt. In fact, as argued e.g. by \citet{Hennig2015b}, cluster analysis can have different aims in different contexts, and is not necessarily or only about finding a unique ``true'' clustering. Given a diverse collection of candidates, finding a clustering that captures well the structure in the data according to a given notion of similarity may be of interest for reasons other than recovery performance on benchmark datasets with ground-truth, and the AUCC/Gamma Index is one way of doing it. In summary, since there is no free lunch in clustering evaluation, we believe that AUCC can be a useful additional tool in an analyst's toolbox as a standalone measure or in combination with other measures. Indeed, the combination of relative validity measures has gained attention in the literature recently~\citep{VenJasCam13,JasMouFurCam15,Kim2018}.

% BibTeX users please use one of
\bibliographystyle{chicago}
\bibliography{library.bib,additional_library.bib}   % name your BibTeX data base

\end{document}
% end of file template.tex